\g@addto@macro\normalsize{%
	\setlength\abovedisplayskip{2pt}
	\setlength\belowdisplayskip{2pt}
	\setlength\abovedisplayshortskip{2pt}
	\setlength\belowdisplayshortskip{2pt}
}
\newcommand{\fgr}[3][\relax]{%
	\begin{figure}[htp]%
		\centering
		\includegraphics[#2]{#3}%
		\ifx\relax#1\else\caption{{#1}}\fi
	\end{figure}%
}
\newcommand{\cbit}{\begin{compactitem}}
	\newcommand{\ceit}{\end{compactitem}}
\newcommand{\cben}{\begin{compactenum}}
	\newcommand{\ceen}{\end{compactenum}}
\newcommand{\bal}{\begin{align}}
\newcommand{\ean}{\end{align}}
\newcommand{\bit}{\begin{itemize}}
\newcommand{\eit}{\end{itemize}}
\newcommand{\ben}{\begin{enumerate}}
\newcommand{\een}{\end{enumerate}}
\newcommand{\beq}{\begin{equation}}
\newcommand{\eeq}{\end{equation}}
\newcommand{\R}{\mathbb{R}}
\newcommand{\mV}{\mathcal{V}}
\newcommand{\mD}{\mathcal{D}}
\newcommand{\mR}{\mathcal{R}}
\newcommand{\mN}{\mathcal{N}}
\newcommand{\mL}{\mathcal{L}}
\newcommand{\mO}{\mathcal{O}}
\newcommand{\mRt}{\mathcal{R}_{\text{test}}}
\newcommand{\mRth}{\hat{\mathcal{R}}_{\text{test}}}
\newcommand{\mRh}{\hat{\mathcal{R}}}
\newcommand{\riskd}{{\sc Risk-Doc}\xspace}
\newcommand{\risks}{{\sc{Risk-Sen}}\xspace}
\newcommand{\nyt}{{\sc{NYT-Dstr}}\xspace}
\renewcommand*\env@matrix[1][*\c@MaxMatrixCols c]{%
	\hskip -\arraycolsep
	\let\@ifnextchar\new@ifnextchar
	\array{#1}}
\newcommand{\byy}{\mathbf{y}}
\newcommand{\bA}{\mathbf{A}}
\newcommand{\bQk}{\mathbf{R}_k}
\newcommand{\bR}{\mathbf{R}}
\newcommand{\bN}{\mathbf{N}}
\newcommand{\bX}{\mathbf{X}}
\newcommand{\bW}{\mathbf{W}}
\newcommand{\bG}{\mathbf{G}}
\newcommand{\ba}{\mathbf{a}}
\newcommand{\by}{\mathbf{v}_0}
\newcommand{\byk}{\mathbf{v}_k}
\newcommand{\bykp}{\mathbf{v}_{k'}}
\newcommand{\bz}{\mathbf{z}}
\newcommand{\bzk}{\mathbf{z}_k}
\newcommand{\bzz}{\mathbf{z}_0}
\newcommand{\bx}{\mathbf{x}}
\newcommand{\bone}{\mathbf{1}}
\newcommand{\bw}{\mathbf{w}}
\newcommand{\br}{\mathbf{r}}
\newcommand{\bv}{\mathbf{v}}
\newcommand{\bxp}{\mathbf{x}_{[p]}}
\newcommand{\bxq}{\mathbf{x}_{[q]}}
\newcommand{\bD}{\mathbf{D}}
\newcommand{\bzero}{\boldsymbol{0}}
\newcommand{\bphi}{\boldsymbol{\Phi}}
\newcommand{\hide}[1]{}
\tikzset{
	ncbar angle/.initial=90,
	ncbar/.style={
		to path=(\tikztostart)
		-- ($(\tikztostart)!#1!\pgfkeysvalueof{/tikz/ncbar angle}:(\tikztotarget)$)
		-- ($(\tikztotarget)!($(\tikztostart)!#1!\pgfkeysvalueof{/tikz/ncbar angle}:(\tikztotarget)$)!\pgfkeysvalueof{/tikz/ncbar angle}:(\tikztostart)$)
		-- (\tikztotarget)
	},
	ncbar/.default=0.5cm,
}
\tikzset{square left brace/.style={ncbar=0.5cm}}
\tikzset{square right brace/.style={ncbar=-0.5cm}}
\newcommand{\method}{{\sc RaRecognize}\xspace}
\newcommand{\lac}{{\sc L2AC}\xspace}
\newcommand{\senc}{{\sc SENCForest}\xspace}
\newcommand{\bl}{{\sc Baseline}\xspace}
\newcommand{\rarerate}{{\it acc(rare)}\xspace}
\newcommand{\precision}{{\it Precision}\xspace}
\newcommand{\recall}{{\it Recall}\xspace}
\newcommand{\preseen}{{\it Precision (seen)}\xspace}
\newcommand{\fo}{{\it F1}\xspace}
\newcommand{\reseen}{{\it Recall (seen)}\xspace}
\newcommand{\reunseen}{{\it Recall (unseen)}\xspace}
\newcommand{\rulesep}{\unskip\ \vrule\ }
\definecolor{OliveGreen}{rgb}{0,0.6,0}
\newcommand\footnoteref[1]{\protected@xdef\@thefnmark{\ref{#1}}\@footnotemark}
\newsavebox\CBox
\def\textBF#1{\sbox\CBox{#1}\resizebox{\wd\CBox}{\ht\CBox}{\textbf{#1}}}
\title{Continual Rare-Class Recognition \\with Emerging Novel Subclasses}
\author{Hung Nguyen \quad\quad Xuejian Wang \quad\quad Leman Akoglu}
\institute{Carnegie Mellon University \\
 Heinz College of Information Systems and Public Policy \\
   \email{\{hungnguy, xuejianw, lakoglu\}@andrew.cmu.edu}}
\date{}
\begin{document}
\maketitle

\begin{abstract} 

Given a labeled dataset that contains a rare (or minority) class of \textit{of-interest} instances, as well as a large class of instances that are \textit{not} of interest,
how can we learn to recognize future \textit{of-interest} instances over a continuous stream?
We introduce \method, which  ($i$) estimates a {\em general} decision boundary between the rare and the majority class, ($ii$) learns to recognize individual rare subclasses that exist within the training data, as well as ($iii$) flags instances from previously unseen rare subclasses as newly emerging.
The learner in $(i)$ is general in the sense that by construction it is dissimilar to the \textit{specialized} learners in $(ii)$, thus distinguishes minority from the majority without overly tuning to what is seen in the training data.  
Thanks to this generality, \method ignores all future instances that it labels as majority
and recognizes the recurrent as well as emerging {\em rare} subclasses only. This saves effort at 
test time as well as ensures that the model size grows moderately over time as it only maintains specialized minority learners.
Through extensive experiments, we show that \method outperforms state-of-the art baselines on three real-world datasets that contain corporate-risk and disaster documents as rare classes.

\end{abstract}

\section{Introduction}
\label{sec:intro}

Given a labeled dataset containing (1) a rare (or minority) class of \textit{of-interest} documents, and (2) a large set of \textit{not-of-interest} documents,
how can we learn a model that can effectively identify future \textit{of-interest} documents over a continuous stream?
Different from the traditional classification setup, the stream might contain \textit{of-interest} (as well as \textit{not-of-interest}) documents from \textit{novel subclasses that were not seen in the training data}. Therefore, the model is required to continually recognize both the recurring as well as the emerging instances from the underlying rare class distribution.
 
 
 Let us motivate this setting with a couple of real-world examples.
 Suppose we are given a large collection of social media documents (e.g. Twitter posts).
 A subset of the collection is labeled as \textit{risky}, indicating posts that constitute (financial, reputational, etc.) risk to a corporation. The rest (majority) of the collection is \textit{not-risky}. The goal is then to learn a model that can continually identify future posts that are {\em risky} over the social-media stream.
 Here, the rare class contains \textit{risky} documents of a few known types, 
  such as bankruptcy, corruption, and spying.
  However, it is unrealistic to assume that it contains examples from all possible risk types---given the large spectrum, labeling effort, and potentially evolving nature of risk.

Consider another case where the training set consists of news articles. A subset of the articles belongs to the rare class of {\em disasters}, indicating news about natural or man-made disasters. The rest are \textit{not-disaster} articles.
 Similar to the first case, the rare class might contain articles about floods, earthquakes, etc. however it is hard to imagine it would contain instances from all possible types of disasters. The goal is to learn to  continually recognize future articles on disasters.


In both examples above, the model needs to learn from and generalize beyond the labeled data so as to recognize future rare-class instances, both from \textit{recurring} (i.e., seen in the training data) as well as from \textit{novel subclasses}; for instance sexual assault, cyber attack, etc. 
in risk domain and explosions, landslides, etc. in disasters domain. 
In machine learning terms, this is a very challenging setup in which the learner needs to generalize not only to unseen instances but also to \textit{unseen} \textit{distributions}. In other words, this setting involves test data that has a related yet different distribution than the data the model was trained on.


The stream classification problem under emerging novel classes has been studied by both machine learning and data mining communities.
The area is referred to under various names including 
open-world classification \cite{shu2017doc,shu2018unseen},
life-long learning \cite{chen2016lifelong}, and
continual learning \cite{shin2017continual}.
In principle, these build a ``never-ending learner'' that can (1) assign those recurring instances from known old classes to their respective class, (2) recognize emerging classes, and (3) grow/extend the current learner to incorporate the new class(es).
The existing methods differ in terms of accuracy-efficiency trade-offs 
and various assumptions that they make. 
 (See Section \ref{sec:relatedwork} for detailed related work.)
 A common challenge that all of them face is what is known as {\em catastrophic forgetting}, mainly due to model growth. In a nutshell, the issue is the challenge of maintaining performance on old classes as the model is constantly grown to accommodate the new ones.
 
\begin{figure}[!h]
  \centering
  \includegraphics[width=1.0\columnwidth]{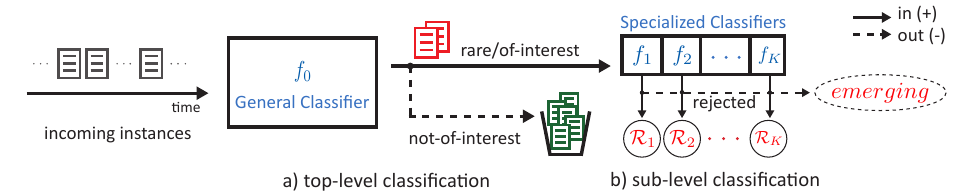}
  \caption{An illustration of the recognition flow in our proposed model.}
  \label{fig:key_idea}
\end{figure}

 Our work is different from all prior work in one key aspect: our goal is not to recognize \textit{any and every} newly emerging class---but only those (sub)classes related to the rare class \textit{of-interest}. That is, our primary goal is to recognize rare-class instances. \textit{Not-of-interest} instances, as long as they are filtered out accurately, are ignored---{\em no matter they are recurrent or novel}, as depicted in Fig.~\ref{fig:key_idea}.
 This way, we carefully avoid the aforementioned issue that current models face. Our model grows slowly, only when novel \textit{rare} subclasses are recognized. Thanks to a moderate model size (by definition, rare subclasses are far fewer), our model is not only less prone to catastrophic forgetting but also (a) is faster at test time, and (b) requires much less memory.
 
 \hide{
 To this end, we first ($i$) build \textit{specialized} learners that recognize individual rare subclasses that exist within the training data. Then, we ($ii$) estimate a {\em general} decision boundary between the minority- (i.e., rare) and majority-class instances.
 This binary classifier is general in the sense that it is sufficiently different from the \textit{specialized} learners by construction, and thus distinguishes minority from the majority without overly tuning to what is seen in the training data.
 For newly-arriving test instances over the stream, we simply ignore those labeled as majority-class by the {\em general} learner. Otherwise the instance belongs to the rare class, in which case the specialized learners are employed to either assign to existing rare subclasses or abstain/reject the instance as newly emerging.
}
 
 We summarize the main contributions of this work as follows.
 \bit
 \item {\bf Problem and Formulation:~} We address the problem of recognizing instances from a rare, {\em of-interest} class over a stream continually. The setting differs from traditional (binary) classification in that the data distribution (for both rare and majority class) might change over time, where novel subclasses emerge.
 We formulate a new model called \method that {\em simultaneously} learns ($i$) a separate specialized classifier (SC) that recognizes an individual rare subclass, as well as ($ii$) a general classifier (GC) that separates rare instances from the majority. While being discriminative, GC is constructed to be dissimilar to the individual SCs such that it can generalize without overly tuning to seen rare subclasses in the training data.

 \item {\bf Efficient Algorithm:~} Our proposed solution exhibits two key properties: runtime and memory efficiency; both essential for the 
 stream setting.
 Given a new instance 
 that GC labels as belonging to the majority class, we simply do nothing---no matter it is recurrent or emerging.
 By not processing the majority of the incoming instances, we achieve \textit{fast response time}.
 Moreover \method remains compact, i.e. \textit{memory-efficient}, as it requires space linear in the number of \textit{rare} subclasses which only grows slowly. 
 
 \item {\bf Applications:~} Recognizing recurrent as well as novel instances that belong to a certain class \textit{of-interest} is a broad problem that finds numerous applications, e.g. in monitoring and surveillance.
 For example, such instances could be production-line items with the goal to continually recognize faulty ones where novel fault types might emerge over time. They could also be public documents, such as social media posts, where the goal is to recognize public posts {\em of-interest} such as bullying, shaming, disasters, threat, etc. 
 \eit
 \noindent
 \textbf{Reproducibility:~} We share the source code for \method and
our public-domain datasets at \url{{https://github.com/hungnt55/RaRecognize}}.

\section{Problem Setup and Preliminary Data Analysis}
\label{sec:motivation}

{\bf Problem Setup and Overview.~}
We start by introducing the problem statement more formally with proper notation.
As input, a labeled training dataset  $\mD = \mR\cup \mN \in \R^{n\times d}$ containing $n$ $d$-dimensional instances is provided. The set $\mR = \{(\bx_1,y_1), (\bx_2,y_2), \ldots, (\bx_{n_0},y_{n_0})\}$  consists of $|\mR|=n_0$ instances belonging to the {\em of-interest} rare class where $y_i=+1$ for $i=1,\ldots,n_0$ and the set
$\mN = \{(\bx_{n_0+1},y_{n_0+1}), \ldots, (\bx_{n},y_{n})\}$  consists of $|\mN|=(n-n_0)$ instances from the {\em not-of-interest} class where $y_i=-1$ for $i=(n_0+1),\ldots, n$.
 Without loss of generality, we will refer to the data instances as documents and to the rare class as the \textit{risk} class in the rest of this section to present our ideas more concretely.

Given $\mD$, the goal is to recognize future \textit{risk} documents, \textit{either recurring or newly emerging}, over a stream (or set) of new documents $\bx_{n+1}, \bx_{n+2}, \ldots$ (here, each document has a vector representation denoted by $\bx$ such as bag-of-words, embedding, etc.). The new documents may associate with recurring risk, i.e., belong to known/seen risk subclasses $1, \ldots, K$ in $\mR$. They may also be emerging, i.e., from previously unknown/unseen new risk subclasses ${(K+1)}, {(K+2)}, \ldots$; which differentiates our setup from the traditional classification problem.

Therefore, we start by decomposing $\mR$ into known risk subclasses, $\mR = \bigcup_{k=1}^K \mR_k$, where 
$\mR_k$ 
contains the documents that belong to the $k$th risk subclass.
Given $\{\{\mR_1,\ldots, \mR_K\}, \mN\}$ our approach involves simultaneously training the following two types of classifiers:

\cben
\item A \textit{general} classifier (GC) $f_0$ to separate $\mR$ and $\mN$ that can generalize to unseen subclasses of $\mR$,
\item A \textit{specialized} classifier (SC) $f_k$, $k=1\ldots K$, to separate $\mR_k$ and $\mR\backslash \mR_k$. 
\ceen

At test time, we first employ $f_0$.
Our goal is not to recognize every emerging novel class, but only the novel risk subclasses (in addition to recurring ones), thus our first step is to recognize risk. If $f_0$ labels an incoming document $\bx$ as $-1$ (i.e., not-risk), we discard it.
Otherwise, the incoming document is flagged as risky.
For only those labeled as $+1$,  we employ 
 $f_k$'s to further identify the type of risk.
 Among the $f_k$'s that accept $\bx$ as belonging to the $k$th risk subclass, 
 we assign it to the subclass that is $\arg\max_k f_k(\bx)$.
 If all $f_k$'s reject, then $\bx$ is considered to be associated with a new type of emerging risk. (See Fig. \ref{fig:key_idea}.)

\noindent
{\bf The classifier models.~}
Our risk detector is $f_0$ which we learn using the entire labeled dataset $\mD$.
As such, it is trained on a few known risk subclasses in $\mR$ but is desired to be \textit{general} enough to recognize other types of future risk.

To achieve this generality, our main idea is to avoid building $f_0$ on factors that are too specific to any known risk subclass (such that $f_0$ is not overly fit to existing or known risk types)
but rather, to identify broad factors about risk that are \textit{common} to all risk subclasses (such that $f_0$ can employ this broader view to spot risk at large).

In fact, factors specific to the known risk subclasses are to be captured by the corresponding $f_k$'s. Then, $f_0$ is to identify discriminative signals of risk that are sufficiently different from those used by all $f_k$'s.
Moreover, each $f_k$ should differ from other $f_{k'}$'s, $k'\neq k$, to ensure that they are as {\em specialized} as possible to their respective risk types.  
Such dependence among the models is exactly why we train all these ($K+1$) classifiers 
\textit{simultaneously}, to enforce the aforementioned constraints conjointly.
We present our specific model formulation and optimization in Section \ref{sec:proposed}.

\noindent
{\bf Preliminary Data Analysis.~}
Before model formulation, we perform an exploratory analysis on 
one of our real-world datasets containing documents labeled as risky and not-risky.
The goal of the analysis is to see if our hypothesized ideas get realized in the data.

In particular, 
we aim to find out if there exists (1) factors that are specific to each risk subclass,
as well as (2) factors beyond those specific ones that are still discriminative of risk.
For simplicity and interpretability, we use the bag-of-words representation of the data in this section, thus factors correspond to individual words. However, our proposed model can handle other document vector representations in general.  

To this end, we formulate a constrained optimization problem to find 
word sets that cover or characterize different document sets.
Here, we define a word to \textit{cover} a document if the word appears in it at least once. 
Given the set of unique words $\mV$, $|\mV|=d$, we look for
a set of words $\mV_k \subset \mV$ that covers all the documents in $\mR_k$ but as few as those in $\mR \backslash \mR_k$ for all $k=1\ldots K$ (i.e., specific words for each risk subclass), and another set of words $\mV_0 \subset \mV$ that covers all risk documents in $\mR$ but as few as those in $\mN$.
We restrict the word sets to be \textit{non-overlapping}, i.e., $\mV_k \cap \mV_{k'} = \emptyset\;\; \forall k,k' \in \{0,\ldots,K\}$, such that each word can only characterize either one of $\mR_1, \ldots, \mR_K$ or $\mR$ at large.
Under these conditions, if we could find a set $\mV_0$ that shares no words with any $\mV_k$'s while still being able to cover the risky documents but only a few (if at all) not-risky ones, then we can conclude that broad risk terms exist and a \textit{general} $f_0$ can be trained.

Our setup is a constrained mixed-integer linear program (MILP) as follows:
{\footnotesize{
\begin{align}
\label{opt}
\min_{\bphi} \;\;\; & |\by| + \sum_{k=1}^{K} |\byk| +  o + \alpha + \beta & \nonumber \\
\text{s.t.} \;\;\;  & \bzk + \bQk \cdot \byk \geq \bone \;\;\; \forall k=1\ldots K  
&\triangleright \text{ risk subclass coverage with exoneration} \nonumber\\
& \bzz + \sum_{k=1}^{K}\bQk \cdot \by \geq \bone \;\;\;\; & \triangleright \text{ risk coverage with exoneration}\nonumber\\
 & |\bzz| + \sum_{k=1}^K |\bzk| \leq o \;\;\;\; & \triangleright \text{ \# unexplained documents less than $o$}\nonumber\\
& \by + \sum_{k=1}^K \byk \leq \bone \;\;\;\; & \triangleright \text{ each word used for at most 1 set}\nonumber\\
 & \sum_{i\in \mN} \bN_{i} \cdot {\by} \leq \alpha  \;\;\;\; &\triangleright \text{ cap on cross-coverage of not-risk} \nonumber\\
 & \sum_{k=1}^K \sum_{i\in \bQk} \sum_{k'\neq k}  \bR_{k,i} \cdot {\bykp} \leq \beta  \;\;\;\; &\triangleright \text{ cap on cross-coverage among risk subclasses} 
 \nonumber
\end{align}
}}

The program is parameterized by $\bphi= \{\{\byk\}_{k=0}^K, \{\bzk\}_{k=0}^K, o, \alpha, \beta\}$. 
$\bQk \in \R^{n_k\times d}$ denotes the data matrix encoding the word occurrences for $n_k$ documents in risk subclass $k=1\ldots K$, and $\bN \in \R^{(n-n_0)\times d}$ is the corresponding data matrix for the not-risk documents.
 $\by \in \R^d$ and $\byk \in \R^d$'s depict (binary) variables to be estimated that capture the word assignments to the sets $\mV_0$ and $\mV_k$'s respectively. (e.g., $j$th entry of $\by$ is set to 1 if word $j$ is assigned to $\mV_0$ and to 0 otherwise.)

The first set of constraints are coverage constraints for risk subclasses: each document in $\mR_k$ should contain at least one of the words in their assigned set.
Enforcing this constraint for \textit{all} the documents is too strict, hence we introduce additional (binary) variables $\bzk\in \R^{n_k}$'s that ``exonerate'' some documents. When $i$th entry of $\bzk$ is set to 1, then document $i$ in $\mR_k$ is allowed to have no matching words, as  $\bzk=1$ ensures the constraint holds even without any match.
The second constraint is similar, and enforces coverage for the combined set of risk documents, also with exoneration. Of course, we aim to cover as many documents as possible and thus upper-bound the total number of exonerated documents by $o$, where $|\bz|$ denotes the total number of 1s in vector $\bz$.
Next is the no-overlaps constraint, enforcing each word is assigned to only one set. The final two constraints are cross-coverage constraints; the former ensures that the words assigned to $\mV_0$ have less than $\alpha$ number of matches in not-risk documents and the latter ensures that the words assigned to each risk subclass have less than $\beta$ number of matches outside the respective document set in total.

Ideally all of $o$, $\alpha$, and $\beta$ are zero; that is, all documents are covered without any exoneration and no cross-coverage exists.
However, 
that yields no feasible solution. Instead, 
we define them as scalar upper-bound variables added to our minimization objective toward setting them to as small values as possible. 
Finally, our objective aims to find the smallest-size possible word sets. This  ensures that the most important words are selected which also facilitates interpretability. 

\begin{figure}[!t]
  \centering
  \includegraphics[width=1.0\columnwidth]{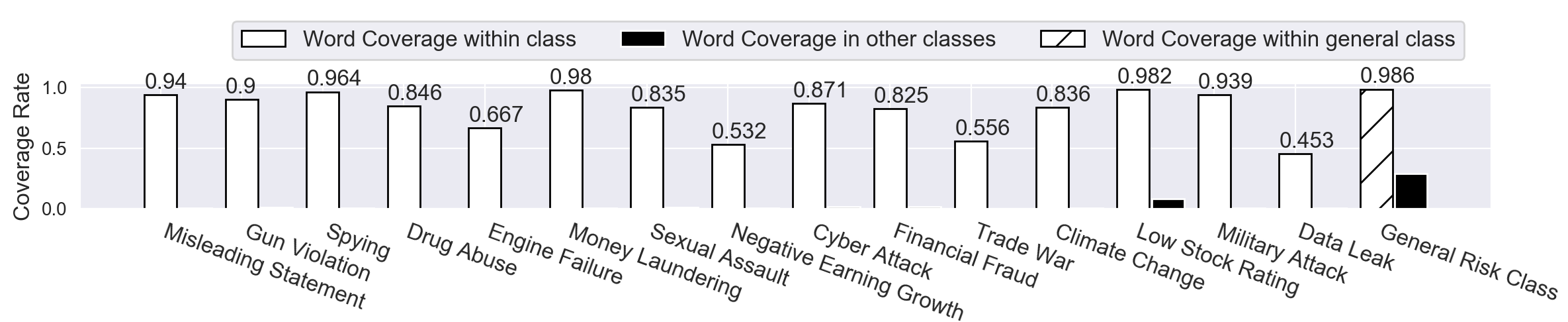}
  \caption{Within- and cross-coverage rates of $\mV_k$'s and $\mV_0$ (resp.) for $\mR_k$'s and $\mR$.}
  \label{fig:cover_ip}
\end{figure}
\begin{figure}[!t] 
	\centering
	\begin{tabular}{cp{0.1cm}cp{0.1cm}cp{0.1cm}|p{0.1cm}c}
		\includegraphics[width=0.2\linewidth]{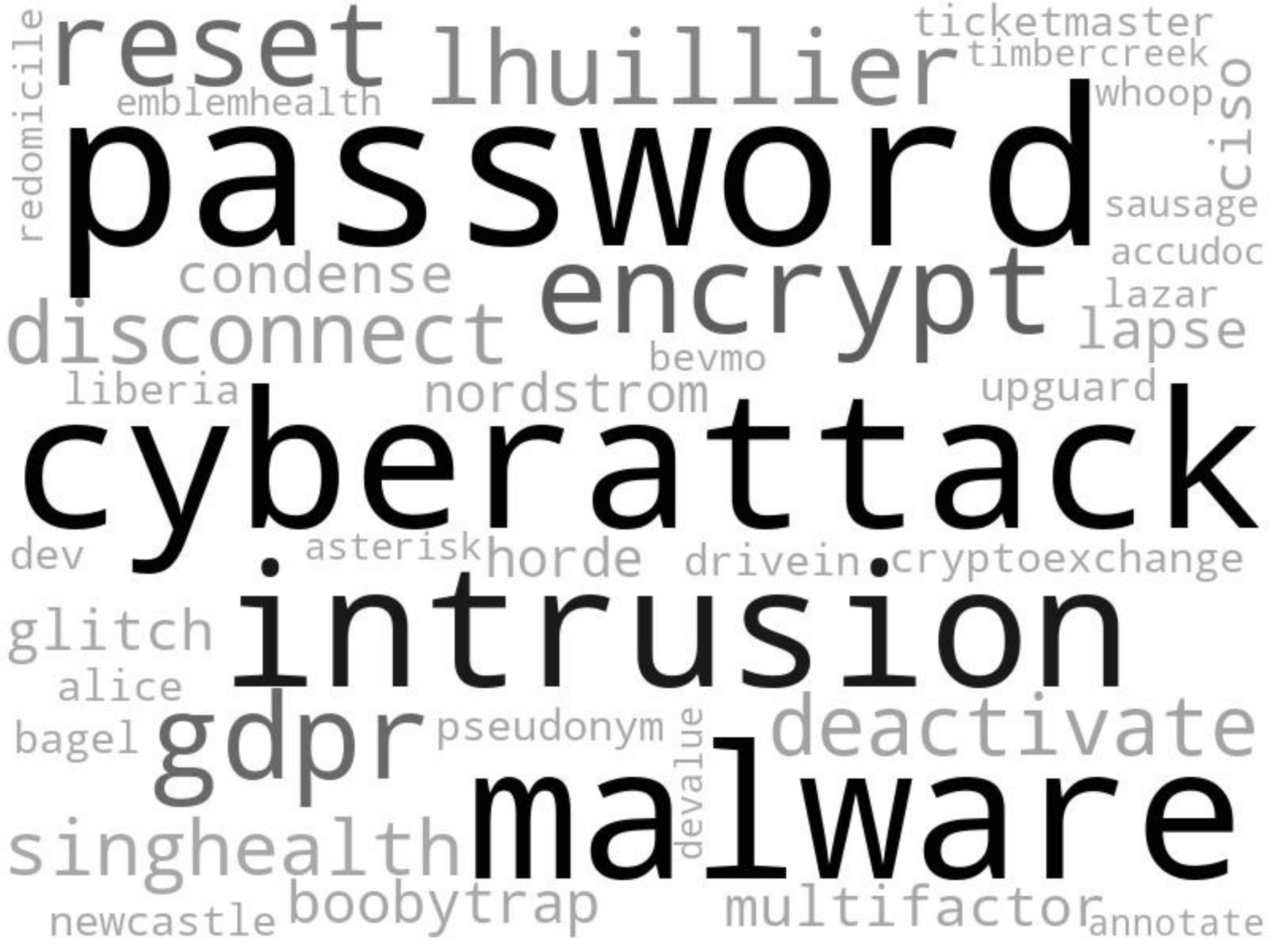} & &
		\includegraphics[width=0.2\linewidth]{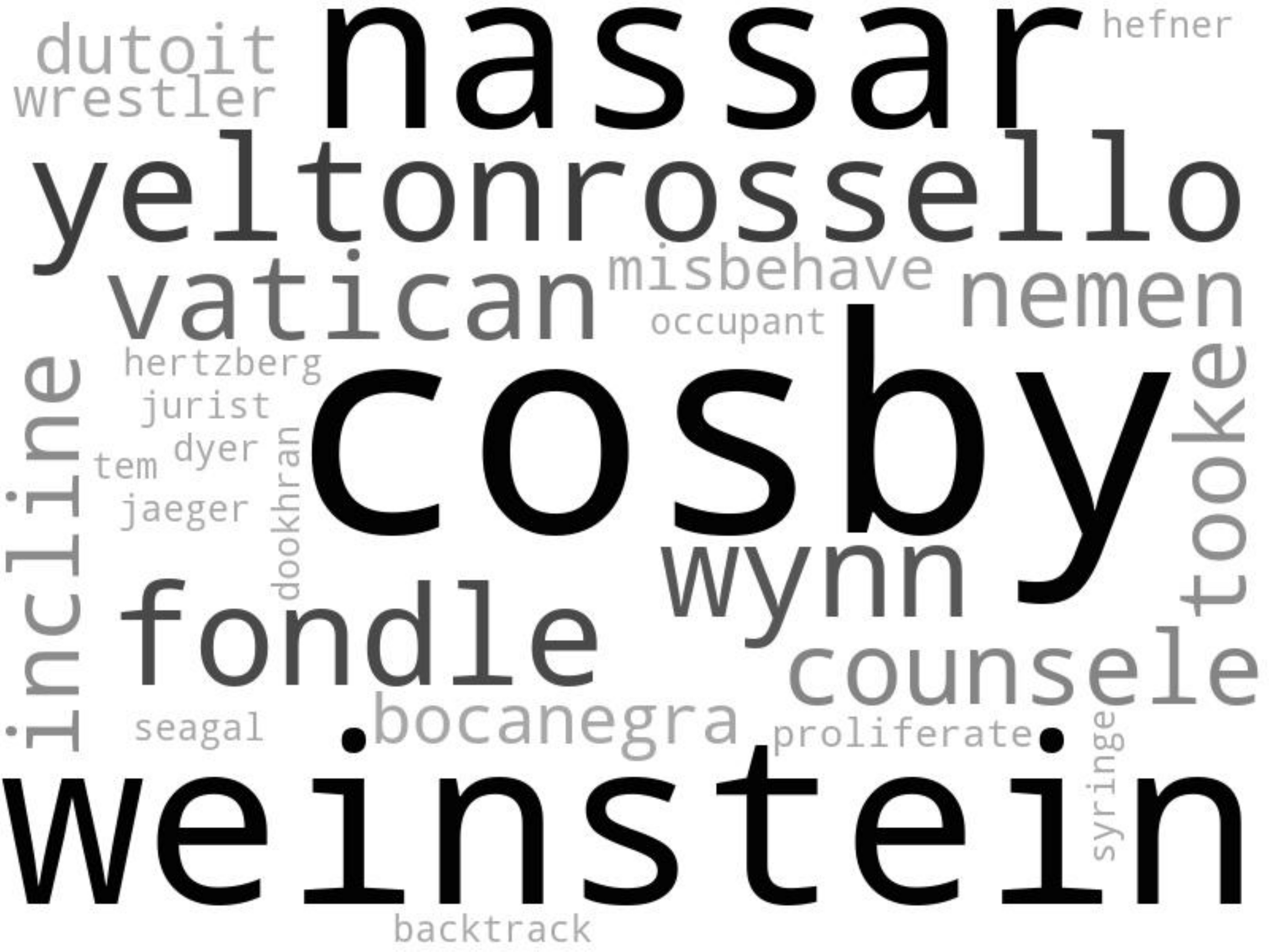} & & 
		\includegraphics[width=0.2\linewidth]{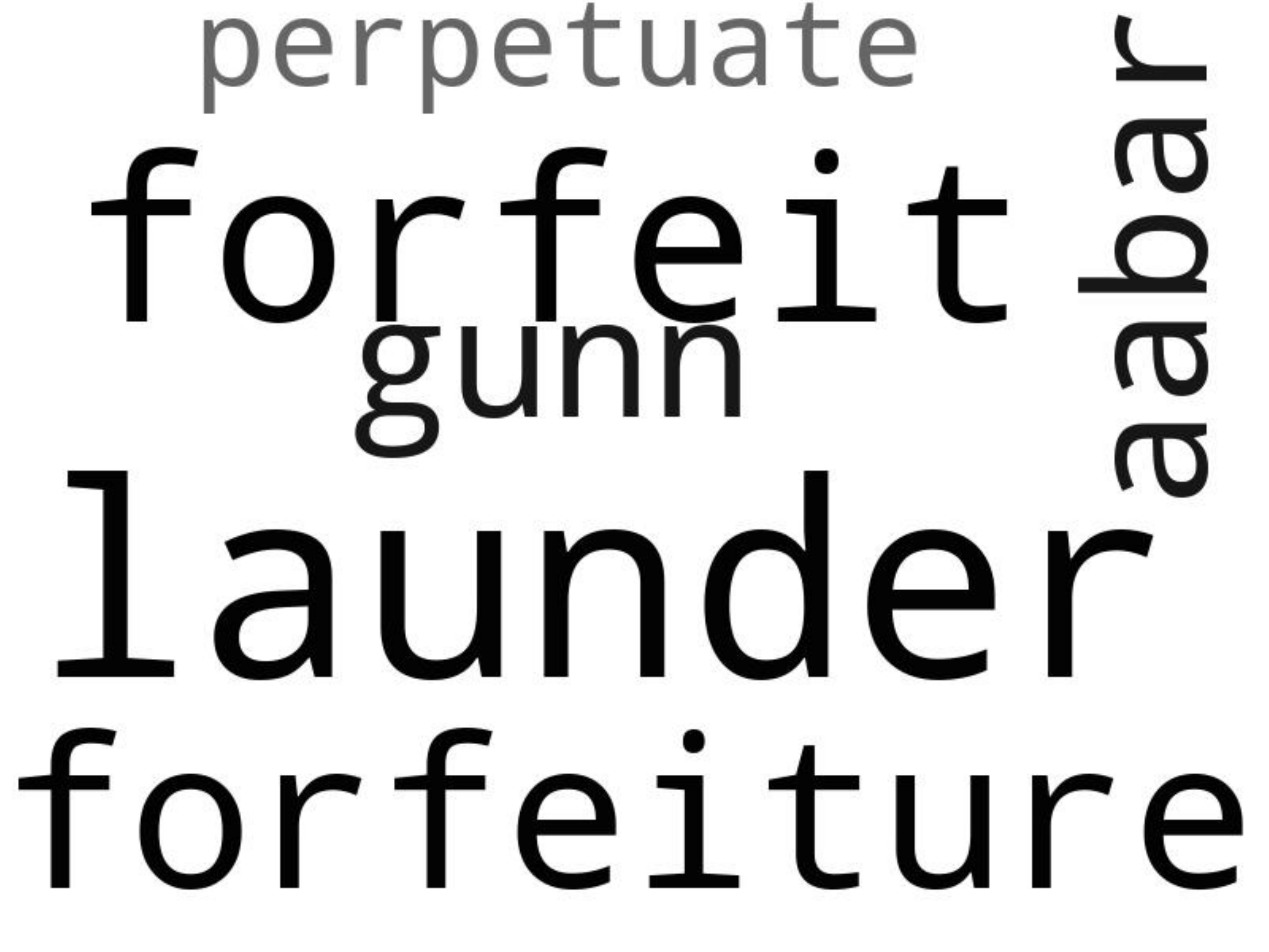}  & & &
		\includegraphics[width=0.2\linewidth]{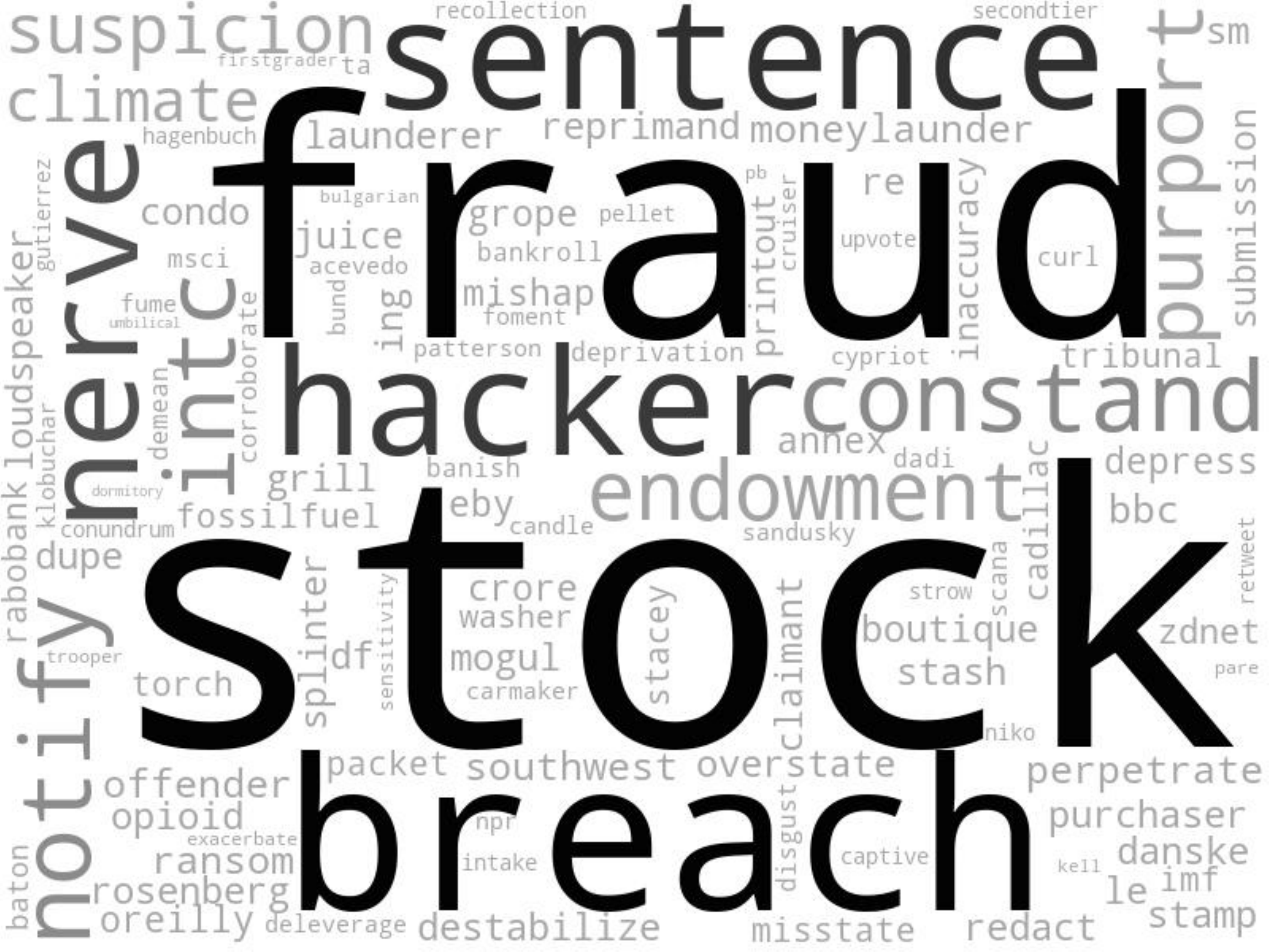}  \\
		($\mV_1$) Cyber attack & & ($\mV_2$) Sexual assault & &
		($\mV_3$) Money laundering & & & ($\mV_0$) General risk
	\end{tabular}
	\caption{Wordclouds representing 3 example risk subclasses and the overall risk class. Notice that the former are quite specific, and the latter are broader.}
	\label{fig:word_cloud_IP}
\end{figure}

We provide an exploratory analysis on a dataset containing corporate risk documents as the {\em of-interest} class.
It contains 15 risk subclasses as outlined by Fig.~\ref{fig:cover_ip}. (See Sec. \ref{sec:setup} for details.)
First the quantitative measures:
as shown in the figure, the MILP finds word sets $\mV_k$'s with at least {82.5\%} up to 98.2\% coverage for {11/15} of the subclasses with an overall coverage of 96.7\% (rest are the exonerated ones). Moreover, cross-coverage is either zero or very low for all the subclasses. These suggest that accurate SCs can be learned.
Importantly, there exist words $\mV_0$ that are \textit{distinct} from all $\mV_k$'s and yet able to cover {98.6\%} of the overall risk documents, promising that a broad GC can be learned.  

To equip the reader with intuition, we present the selected words for 3 example risk subclasses
along with the general risk class words in Fig.~\ref{fig:word_cloud_IP} (word size is proportional to the within vs. cross-coverage ratio). It is easy to see that very specific words are selected for subclasses; such as 
 \texttt{password}, \texttt{cyberattack}, \texttt{malware} for Cyber attack, and \texttt{cosby}, \texttt{weinstein}, \texttt{fondle} for Sexual assault.
On the other hand, in the general risk class, a set of broader corporate risk words appear
such as \texttt{fraud}, \texttt{stock}, \texttt{breach} and \texttt{sentence}.

These preliminary results show promise for the feasibility of our hypothesized models and
demonstrate the rationale behind our proposed \method, which we formally introduce next.

\section{Continual Rare-Class Recognition}
\label{sec:proposed}


In this section, we introduce the individual components of our model,
present the underlying reasoning for our formulation, show convexity and present the optimization steps, and conclude with space and time-complexity analysis.

\subsection{Model Formulation}

As discussed in the previous section, our goal is to learn (1) specialized classifiers $f_k$'s and (2) a general classifier $f_0$.

The {\bf specialized classifier} $f_k$, $k=1\ldots K$, is to learn a decision boundary that separates the $k$th rare subclass instances $\mR_k$ from the remainder of rare instances $\mR\backslash \mR_k$.
Let us write down the regularized loss function for each $f_k$ as
\begin{align}
\mL(f_k; \bw_k, b_k) & = 
\sum_{i=1}^{n_0} 
\underbrace{\max\big(0, \big[1-y_i (\bw_k^T \bx_i + b_k)\big]\big)}_{\ell(\bx_i,y_i; \bw_k,b_k)}
+ \frac{\lambda_k}{2} \|\bw_k\|^2 
\end{align}
where $y_i=+1$ for $\bx_i\in \mR_k$ and $y_i=-1$ otherwise.
We adopt the hinge loss and the ridge regularization as in Eq. (1), 
however, one could instead use other loss functions, such as the logistic, exponential or cross-entropy losses, 
as well as other norms for regularization.

The {\bf general classifier} $f_0$ is to separate rare class instances $\mR$ from the majority instances $\mN$, without relying on factors specific to known rare subclasses.
One way to achieve this \textit{de-correlation} is to enforce $f_0$ to learn coefficients $\bw_0$ that are different from all $\bw_k$'s. The loss function can be written as
\beq
\label{fzero}
\mL(f_0; \bw_0, b_0)  = \sum_{i=1}^{n} \ell(\bx_i,y_i; \bw_0,b_0) + \frac{\lambda_0}{2} \|\bw_0\|^2 + \frac{\mu}{2} \sum_{k=1}^{K} \bw_0^T \bw_k  \;,
\eeq
where $y_i=+1$ for $\bx_i\in \mR = \{\mR_1 \cup \ldots \cup \mR_K\}$ and $y_i=-1$ otherwise.
As required, the third term in Eq. \eqref{fzero} penalizes $\bw_0$ being correlated with any $\bw_k$, enforcing it to be as orthogonal to $\bw_k$'s as possible.
However, it does not prevent $\bw_0$ from capturing \textit{different yet correlated features} to those captured by $\bw_k$'s. This issue can arise when features exhibit multi-collinearity.

For example, in a document dataset
the words {\tt earthquake}, {\tt shockwave}, and {\tt aftershock} could be collinear. In this case it is possible that $f_k$ estimates large coefficients on  a strict subset of these words (e.g., {\tt shockwave} and {\tt aftershock}) as they are redundant. This leaves room for $f_0$ to capitalize on the remaining words (e.g., {\tt earthquake}), which is undesirable since we aim $f_0$ to learn about the rare class boundaries beyond the specifics of the known subclasses. 

Therefore, we reformulate the model correlation penalty as follows.
\beq
\label{fzerore}
\frac{\mu}{2} \sum_{k=1}^{K} \sum_{p,q} \big( w_{0,p} w_{k,q} \;\bx_{[p]}^T \bx_{[q]} \big)^2 = \frac{\mu}{2} \big\| (\bX^T\bX) \odot (\bw_0 \bw_k^T) \big\|_F^2 \;,
\eeq
where $w_{0,p}$ and $w_{k,q}$ denote the $p$th and $q$th entries of $\bw_0$ and $\bw_k$ respectively, $\bX \in \R^{n\times d}$ denotes the input data matrix, $\bxp, \bxq$ respectively denote the $p$th and $q$th columns of $\bX$, and $\odot$ depicts the element-wise multiplication.

We call Eq. \eqref{fzerore} the cross-correlation penalty.
Similarly, we introduce self-correlation penalty to each model $k=0,\ldots, K$ by adding to the respective loss the term
$\sum_{p,q} \big( w_{k,p} w_{k,q} \;\bx_{[p]}^T \bx_{[q]} \big)^2$.
Self-correlation prevents each model from estimating large coefficients on higly correlated (near-redundant) features, which improves sparsity and interpretability, and as we show also ensures convexity.

Then, the overall loss function incorporating the cross- and self-correlation penalty terms for all models $f_0, f_1, \ldots, f_K$ is given as follows.
\begin{tcolorbox}[ams align,myformula]
\mL & = \sum_{i=1}^{n} \ell(\bx_i,y_i; \bw_0,b_0) + \frac{\lambda_0}{2} \|\bw_0\|^2 + \sum_{k=1}^K 
\left[\sum_{i=1}^{n_0} \ell(\bx_i,y_i; \bw_k,b_k) + \frac{\lambda_k}{2} \|\bw_k\|^2 \right]  
\nonumber\\
&  
+  \frac{\mu}{2} \sum_{p,q} 
\bigg\{ 
\underbrace{\frac{1}{2}
	(w^2_{0,p} w^2_{0,q})
	+ \frac{1}{2} \sum_{k=1}^{K} (w^2_{k,p} w^2_{k,q})}_{\text{self-correlation}}
+ \underbrace{w^2_{0,p} (\sum_{k=1}^{K} w^2_{k,q})}_{\text{cross-correlation}}
\bigg\} 
\big(  \bx_{[p]}^T \bx_{[q]} \big)^2 \vspace{0.25in}
\label{eq:loss}
\end{tcolorbox}


\subsection{Convexity and Optimization}

We train all the models $f_0, f_1, \ldots, f_K$ \textit{simultaneously} by minimizing the total overall loss $\mL$. A conjoint optimization is performed because the cross-correlation penalty terms between $\bw_0$ and $\bw_k$'s induce dependence between the models. 

For optimization we employ the accelerated subgradient descent algorithm which is guaranteed to find the global optimum solution because, as we show next, our loss function $\mL$ is convex.

\begin{theorem}
 \textit{The joint loss function $\mL$ involving the cross- and self-correlation penalty terms among $\bw_0, \bw_1, \ldots, \bw_K$ remains convex.}
\end{theorem}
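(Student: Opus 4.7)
The plan is to decompose $\mL$ into a sum of jointly convex pieces in the parameters $\{(\bw_k,b_k)\}_{k=0}^K$. The hinge losses $\ell(\bx_i,y_i;\bw,b)=\max\{0,\,1-y_i(\bw^\top\bx_i+b)\}$ are pointwise maxima of two affine functions of $(\bw,b)$ and therefore convex; the ridge terms $\tfrac{\lambda_k}{2}\|\bw_k\|^2$ are convex quadratics. Each hinge sum and each ridge term involves only one parameter block $(\bw_k,b_k)$, so those contributions are trivially jointly convex, and all of the real work is in the quartic correlation block $\Omega$ (the last summand in the boxed loss).

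Convexity of $\Omega$ is not obvious term by term---an isolated cross-correlation summand $C_{pq}\,w_{0,p}^2 w_{k,q}^2$ has an indefinite Hessian in $(w_{0,p},w_{k,q})$---so the cross-correlations must be bundled with the self-correlations before arguing convexity. Two structural facts drive that bundling. First, the coefficient matrix $C\in\R^{d\times d}$ with $C_{pq}:=(\bxp^\top\bxq)^2$ is the Hadamard square of the Gram matrix $G=\bX^\top\bX$; since $G\succeq 0$, Schur's product theorem gives $C=G\odot G\succeq 0$, and $C_{pq}\geq 0$ entrywise. Second, each coordinate map $w_{k,p}\mapsto u_{k,p}:=w_{k,p}^2$ is convex in $w_{k,p}$ and takes nonnegative values.

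Given these, the key algebraic step is to reorganize $\Omega$ into the completed square
\[
\Omega \;=\; \frac{\mu}{4}\,\sigma^\top C\,\sigma,\qquad \sigma_p \;:=\; w_{0,p}^2+\sum_{k=1}^K w_{k,p}^2 \;\geq\; 0,
\]
by pairing the weight $\tfrac12$ on each self-correlation with the unit weight on each cross-correlation $(\bw_0,\bw_k)$, and symmetrizing the asymmetric boxed term $w_{0,p}^2\sum_k w_{k,q}^2$ under $p\leftrightarrow q$ via the symmetry of $C$, so that the $(p,q)$ double sum collapses onto the outer product $\sigma_p\sigma_q$. Once this identity is in place, the outer function $\sigma\mapsto\sigma^\top C\sigma$ is convex because $C\succeq 0$, and monotone nondecreasing in each $\sigma_p$ on the nonnegative orthant because the gradient $2C\sigma$ is entrywise nonnegative whenever $C$ and $\sigma$ are. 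Each $\sigma_p$ is a convex function of the $w$'s by the second fact above, so the standard vector composition rule (convex nondecreasing outer $\circ$ convex inner) yields joint convexity of $\Omega$ in $(\bw_0,\ldots,\bw_K)$, and summing with the already-convex hinge and ridge pieces proves the theorem.

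The hard part will be the algebraic bookkeeping that matches the boxed mixture of factor-$\tfrac12$ self-correlations and factor-$1$ cross-correlations with the symmetric quadratic form $\sigma^\top C\sigma$; once that identity is verified, Schur's theorem and the convex composition rule that follow it are routine.
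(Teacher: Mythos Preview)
Your composition argument hinges on the identity $\Omega=\tfrac{\mu}{4}\,\sigma^\top C\sigma$ with $\sigma_p=\sum_{k=0}^K w_{k,p}^2$, but this identity is false for $K\geq 2$. Expanding $\sigma_p\sigma_q$ produces \emph{all} cross products $w_{k,p}^2\,w_{k',q}^2$ over $k,k'\in\{0,\ldots,K\}$, in particular the off-diagonal pairs $(k,k')$ with $1\leq k\neq k'\leq K$. The boxed penalty in the paper, however, contains \emph{no} cross-correlation between distinct specialized classifiers $\bw_k,\bw_{k'}$ ($k,k'\geq 1$): the only cross term is $w_{0,p}^2\sum_{k=1}^K w_{k,q}^2$, coupling $\bw_0$ to each $\bw_k$ individually. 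After symmetrizing in $(p,q)$ the bracket becomes
\[
\tfrac12\Big(u_{0,p}u_{0,q}+\sum_{k=1}^K u_{k,p}u_{k,q}+\sum_{k=1}^K(u_{0,p}u_{k,q}+u_{k,p}u_{0,q})\Big),\qquad u_{k,p}:=w_{k,p}^2,
\]
which is strictly smaller than $\tfrac12\sigma_p\sigma_q$ whenever two distinct $u_k,u_{k'}$ ($k,k'\geq 1$) are nonzero. (For $K=1$ your identity does hold, which may be why the bookkeeping looked plausible.) So the ``hard algebraic bookkeeping'' you flag at the end cannot be completed: the completed-square form is not there, and the monotone-convex composition rule on $\sigma$ does not apply.

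The paper proceeds differently: it computes the Hessian of the correlation block explicitly and writes it as $\mathbf{H}=\mathbf{H}_1+\mathbf{H}_2$, with $\mathbf{H}_2$ a nonnegative diagonal and $\mathbf{H}_1$ a block matrix Hadamard-multiplied by $(\bX^\top\bX)^{\odot 2}$, then argues PSD-ness via a rank-one decomposition together with Schur's product theorem. The sparsity pattern of $\mathbf{H}_1$---nonzero blocks only on the diagonal and in the first block-row/column---is precisely the absence of $(\bw_k,\bw_{k'})$ couplings that breaks your identity. If you try to salvage a composition-style argument via $\Omega=\tfrac{\mu}{4}\sum_{k=1}^K (u_0+u_k)^\top C(u_0+u_k)-\tfrac{\mu}{4}(K-1)\,u_0^\top C u_0$, the subtracted term destroys direct convexity-by-composition, and you are pushed back to a Hessian-level argument anyway.
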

\begin{proof}
	The non-negative sum of convex functions is also convex.
The first line of $\mL$ as given above is known to be convex since $\ell(\cdot)$ (hinge loss) and L-$p$ norms for $p\geq 1$ are both convex. The
proof is then by showing that the overall correlation penalty term in the second line of $\mL$ is also convex by showing that its Hessian matrix is positive semi-definite (PSD).  See Supplementary A.1 for details. \qed
\end{proof}

Since our total loss is a convex function, we can use gradient-based optimization to solve it to optimality.
To this end, we provide the gradient updates for both $\bw_0$ and $\bw_k$'s in closed form as follows.

{\bf Partial derivative of $\mL$ w.r.t. $\bw_0$:~}
\beq
\frac{\partial \mL}{\partial w_{0,p}} = \sum_{i=1}^{n} \frac{\partial \big[1-y_i (\bw_0^T \bx_i + b_0)\big]_+}{\partial w_{0,p}} + \lambda_0 w_{0,p}  + \mu w_{0,p}\; \sum_{q=1}^{d} 
\big( \sum_{k=1}^K w^2_{k,q} + w^2_{0,q} \big) \big(  \bx_{[p]}^T \bx_{[q]} \big)^2 \nonumber
\eeq
where
\beq
\frac{\partial \big[1-y_i (\bw_0^T \bx_i + b_0)\big]_+}{\partial w_{0,p}} =
\begin{cases}
	0 & \;\;\;\; \text{if } y_i (\bw_0^T \bx_i + b_0) \geq 1 \\
	-y_i x_{i,p} & \;\;\;\; \text{otherwise.}
\end{cases}
\eeq

The vector-update $\frac{\partial \mL}{\partial \bw_{0}}$ can be given in matrix-vector form using the above gradients as
\beq
\label{wzero}
\frac{\partial \mL}{\partial \bw_{0}} = 
\big[ 
\bX^T (-\byy \odot \mathbb{I}(1-\byy \odot (\bX \bw_0 + b_0)>0)) 
\;+\; \bw_0 \odot \big( \lambda_0 \bone + \mu (\bX^T\bX)^2 (\sum_k \bw^2_k + \bw^2_0) \big)
\big]
\eeq
where $\mathbb{I}(\cdot)$ is the indicator function, $\bone$ is a length-$n$ all-ones vector, and $(\bA)^2 = \bA \odot \bA$, i.e., element-wise product, for both matrix $\bA$ as well as for vector $\ba$.

{\bf Partial derivative of $\mL$ w.r.t. $\bw_k$:~}
The steps for each $\bw_k$ is similar, we directly provide the vector-update below.
\beq
\label{wk}
\frac{\partial \mL}{\partial \bw_{k}} = 
\big[ 
\bR^T (-\byy_0 \odot \mathbb{I}(1-\byy_0 \odot (\bR \bw_k + b_k)>0)) 
\;+\; \bw_k \odot \big( \lambda_k \bone + \mu (\bX^T\bX)^2 (\bw^2_k + \bw^2_0) \big)
\big]
\eeq
where $\bR \in \R^{n_0\times d}$ and $\byy_0 \in \R^{n_0}$ consist of only the rare-class instances.

\subsection{Time and Space-Complexity Analysis}

{\bf Time Complexity.~} The (first) gradient term in Eq \eqref{wzero} that is related to the hinge-loss is $\mO(nd)$. 
 The (second) term related to the correlation-based regularization requires $\bX^T\bX$ which can be computed in $\mO(nd^2)$ apriori and \textit{reused} over the gradient iterations.
The term $(\sum_k \bw^2_k + \bw^2_0)$ takes $\mO(Kd)$, and its following multiplication with $(\bX^T\bX)^2$ takes an additional $\mO(d^2)$. The remaning operations (summation with $\lambda_0 \bone$ and element-wise product with $\bw_0$) are only $\mO{(d)}$. As such, the overall computational complexity for subgradient descent for $\bw_0$ is 
$\mO(nd^2 + t[nd+Kd+d^2])$, where $t$ is the number of gradient itearations.

The time complexity for updating the $\bw_k$'s can be derived similarly as   
Eq. \eqref{wk} consists of similar terms, which can be written as
$\mO(t[n_0d+d^2])$.
Note that we omit the $\mO(nd^2)$ this time as $\bX^T\bX$ needs to be computed only once and can be shared across all update rules. Moreover, the number of iterations $t$ is the same as before since the parameter estimation is conjoint.
 
 \noindent
{\bf Space Complexity.~} We require $\mO(d^2)$ storage for keeping $(\bX^T\bX)^2$,
$\mO(Kd)$ for all the parameter vectors $\bw_0,\ldots,\bw_K$, and $\mO(nd)$ for storing $\bX$, for an overall
$\mO(d^2 + Kd + nd)$ space complexity.

\noindent
{\bf Remarks on massive and/or high-dimensional datasets:} Note that both time and space complexity of our \method is quadratic in $d$ and linear in $n$.
 We conclude with parting remarks on cases with large $d$ and huge $n$. 
 
First, high-dimensional data with large $d$: In this case, we propose two possible directions to make the problem tractable. Of course, the first one is dimensionality reduction or representation learning.
When the data lies on a relatively low-d manifold, one could instead use compound features.
We apply our \method to document datasets, where compound features are not only fewer but also sufficiently expressive of the data.
The second direction is to get rid of feature correlations, for instance via factor analysis.
This would drop the term $\big(  \bx_{[p]}^T \bx_{[q]} \big)^2$ from $\mL$ (See \eqref{eq:loss}) and lead to updates that are only {\em linear} in $d$.

Next, massive data with huge $n$: We presented our optimization using batch subgradient descent.
When $n$ is very, very large then storing the original data in memory may not be feasible.
We remark that one could directly employ mini-batch or even stochastic gradient descent in such cases, dropping the space requirement to $O(d^2+Kd)$.

\section{Evaluation}
\label{sec:experiments}

We design experiments to evaluate our proposed method with respect to the following questions:
\begin{itemize}
	\item \textbf{RQ1) Top-level classification (via GC $f_0$)}: How does \method perform in differentiating rare-class instances from the majority compared with the state-of-the-art?
	\item \textbf{RQ2) Sub-level classification (via SCs $f_k$'s)}: How does \method perform in recognizing recurrent and emerging rare subclasses among the compared methods? 
	\item \textbf{RQ3) Interpretability}: Can we interpret \method as a model as to what it has learned and what insights can we draw?
	\item \textbf{RQ4) Efficiency}: What is the scalability of \method? How does it compare to the baselines w.r.t. the running time-vs-performance trade-off?
\end{itemize}

\subsection{Experiment Setup}
\label{sec:setup}
\subsubsection{Dataset Description.}
\begin{wraptable}{r}{6.2cm}
	\centering
	\vspace{-0.3in}
	\caption{Summary of datasets.}
	\label{table:data_sum}
	\begin{tabular}{l|r|r|>{\raggedleft} p{0.75cm}|r}
		\toprule
		\textbf{Name} & $|\mR|$ & $|\mN|$ & $K$ & \textbf{Avg. $|\mR_k|$} \\
		\midrule
		\riskd & 2948 & 2777 & 15 & 196.5 \\
		\risks & 1551 & 7755 & 8  & 193.9 \\
		\nyt & 2127 & 10560 & 13 & 163.6 \\
		\bottomrule
	\end{tabular}
\end{wraptable} 
In this study, we use 3 different datasets with characteristics summarized in Table~\ref{table:data_sum}. The first two datasets are obtained from our industry collaborator (proprietary) and a third public one which we put together. 



\riskd:~ This dataset contains online documents, e.g. news articles, social network posts, which are labeled risky or non-risky to the corporate entities mentioned. If a document is risky, it is further assigned to one of the 15 risk subclasses: \{Climate change, Cyber attack, Data leak, Drug abuse, Engine failure, Fraud, Gun violation, Low stock, Military attack, Misleading statement, Money laundering, Negative growth, Sexual assault, Spying, Trade war\}.

\risks:~ This contains labeled sentences attracted from news articles and categorized into 8 different subclasses: \{Bankruptcy, Bribery corruption, Counterfeiting, Cyber privacy, Environment, Fraud false claims, Labor, Money laundering\}. The majority class consists of non-risky sentences. Note that this dataset comes from a set of articles different from \riskd.

\nyt:~ Extracted from the New York Times, this dataset is comprised of articles on the topics of disasters, i.e. both natural and human-instigated. These topics cover 13 disasters from \{Drought, Earthquakes, Explosions, Floods, Forest and bush fire, Hazardous and toxic substance, Landslides, Lighting, Snowstorms, Tornado, Tropical storm, Volcanoes, Water pollution\}. It also includes a class of random non-disaster news articles from New York Times.

{\bf Document representations.} In this study we apply our work to document datasets, for which we need to define a feature representation. There are numerous options. We report results with tfidf with top 1K words based on frequency, as well as PCA- and ICA-projected data. Linear embedding techniques reduce dimensionality while preserving interpretability. We omit results using non-linear feature representations (e.g., doc2vec \cite{le2014distributed}) as they did not provide any significant performance gain despite computational overhead.

%



{\bf Train/Test Splits.} For each dataset, we randomly partition 2/3 of rare subclasses as seen 
and 1/3 of them as unseen. For training, we use 80\% of the seen subclass instances at random and the rest 20\% forms a seen subclass test set, denoted by $\mR_s$. The set of unseen subclass instances, denoted by $\mR_u$, goes into the test as well. Thus, the rare subclass test consists of 2 parts, i.e. $\mRt = \mR_s \cup \mR_u$. In addition, we also reserve a random 80\% of the majority class for training and the rest 20\% for testing, denoted by $\mN_{test}$. Further, to obtain stable results, we repeat our experiment on 5 different random train/test constructions and report averaged outcomes. 

{\bf Performance Metrics.} (1) For measuring top-level classification performance, we use 3 common metrics \cite{xu2019open,mu2017classification}: \precision, \recall, \fo formally defined in our context as:
{\footnotesize{
$$ \precision = \frac{|\mRt \cap \mRth|}{|\mRth|}, \recall = \frac{|\mRt \cap \mRth|}{|\mRt|}, \fo = \frac{2*\precision*\recall}{\precision + \recall},$$}}
where $\mRth$ is the set of examples predicted as rare subclass. To identify which part of the test (seen or unseen subclasses) the model makes mistakes, we also measure \preseen, \reseen and \reunseen defined as follows:
{\footnotesize{
\begin{gather*}
	\preseen = \frac{|\mR_s \cap \mRth|}{|\mRth|}, \reseen = \frac{|\mR_s \cap \mRth|}{|\mR_s|}, \reunseen = \frac{|\mR_u \cap \mRth|}{|\mR_u|}.
\end{gather*}}}
(2) For sub-level classification test, to quantify the fraction of seen subclass test instances correctly classified and unseen subclass test instances as emerging, we use the following metric:
{\footnotesize{
$$\rarerate = \frac{\overbrace{|\mR_u \cap \mRh_u|}^{acc(emerging)} + \sum_{k=1}^{K}\overbrace{|\mR_{ks} \cap \mRh_{ks}|}^{acc(recurrent)}}{|\mRt|},$$}}
where $\mR_{ks}$ is the set of test examples in subclass $k$ and $\mRh_{ks}$ is the set of examples assigned to that subclass. Here \rarerate$ = 1$ if both seen subclass test instances are perfectly classified to their respective subclasses and unseen subclass instances as emerging. For all of the above metrics, the higher is better.

{\bf Compared Methods.} We compare \method with 2 state-of-the-art methods and 2 simple baselines:
\begin{itemize}
	\item \method-1K, \method-PCA, \method-ICA: 3 versions of \method when tfidf with 1K word dictionary, PCA and ICA representations are used. In \method-PCA, we drop the feature correlation terms $\big(  \bx_{[p]}^T \bx_{[q]} \big)^2$ since features are orthogonal. For the sub-level classification, \method learns a rejection threshold for each specialized classifier based on extreme value theory \cite{siffer2017anomaly}.
	\item \lac \cite{xu2019open}: the most recent method (2019) in open-world classification setting that is based on deep neural networks. We use the recommended parameters $k = 5, n = 9$ (in their notation) from the paper. 
	\item \senc \cite{mu2017classification}: another state-of-the-art ensemble method (2017) using random decision trees for classification under emerging classes. We run \senc with $100$ trees and subsample size $100$ as suggested in their paper.
	\item \bl: a baseline of \method when both cross- and self-correlation terms in Eq.~(\ref{eq:loss}) are removed, via setting $\mu = 0$. Basically, \bl is independently trained $f_0, \dots, f_K$.
	\item \bl-r: a variant of \bl when classification threshold (0.5 by default) is chosen so that the \recall matches that of \method.
\end{itemize}
Note that \senc and \lac aim to detect \textit{any} emerging class without categorizing into rare or majority. For fair comparison, we only inlcude new \textit{rare} subclasses in our test data and consider their rejected instances as belonging to those. In reality, however, emerging classes need to be categorized as rare or not, which these existing methods did not address. 

\begin{table}[!h]
	\begin{adjustwidth}{-.6in}{-.5in}
		\scriptsize
		\centering
		\caption{Performance of methods on the three datasets.}
		\label{table:pre_re_f1}
		\begin{tabular}{l|r|r|r||r|r|r||r|r|r}
			\toprule
			& \multicolumn{3}{c||}{\textbf{Precision}} & \multicolumn{3}{c||}{\textbf{Recall}} & \multicolumn{3}{c}{\textbf{F1}} \\
			\cmidrule{2-10}
			\textbf{Methods} & \textbf{\riskd} & \textbf{\risks} & \textbf{\nyt} & \textbf{\riskd} & \textbf{\risks} & \textbf{\nyt} & \textbf{\riskd} & \textbf{\risks} & \textbf{\nyt} \\
			\hline
			\senc & 0.46$\pm$0.12 & 0.14$\pm$0.03 & 0.36$\pm$0.09 & 0.59$\pm$0.09 & 0.41$\pm$0.08 & 0.39$\pm$0.10 & 0.52$\pm$0.11 & 0.21$\pm$0.04 & 0.37$\pm$0.06 \\
			\lac & 0.79$\pm$0.08 & 0.47$\pm$0.06 & 0.31$\pm$0.07 & 0.57$\pm$0.29 & 0.85$\pm$0.05 & 0.76$\pm$0.07 & 0.63$\pm$0.24 & 0.60$\pm$0.04 & 0.44$\pm$0.07 \\
			\bl & 0.89$\pm$0.04 & 0.79$\pm$0.05 & 0.86$\pm$0.03 & 0.52$\pm$0.06 & 0.55$\pm$0.04 & 0.63$\pm$0.03 & 0.65$\pm$0.05 & 0.65$\pm$0.03 & 0.73$\pm$0.03 \\
			\bl-r & 0.76$\pm$0.10 & 0.56$\pm$0.07 & 0.71$\pm$0.07 & 0.58$\pm$0.13 & 0.57$\pm$0.14 & 0.79$\pm$0.04 & 0.65$\pm$0.10 & 0.55$\pm$0.06 & 0.75$\pm$0.04 \\
			\hline\hline
			\method-1K & 0.89$\pm$0.02 & 0.83$\pm$0.09 & 0.84$\pm$0.03 & 0.58$\pm$0.13 & 0.57$\pm$0.14 & 0.79$\pm$0.04 & 0.70$\pm$0.09 & 0.66$\pm$0.08 & \textBF{0.81$\pm$0.02}  \\
			\method-PCA & 0.85$\pm$0.06 & 0.79$\pm$0.10 & 0.84$\pm$0.10 & 0.58$\pm$0.17 & 0.71$\pm$0.17 & 0.80$\pm$0.07 & 0.68$\pm$0.14 & 0.73$\pm$0.09 & \textBF{0.81$\pm$0.01}  \\ 
			\method-ICA & 0.74$\pm$0.07 & 0.72$\pm$0.11 & 0.84$\pm$0.12 & 0.73$\pm$0.24 & 0.85$\pm$0.18 & 0.82$\pm$0.08 & \textBF{0.71$\pm$0.09} & \textBF{0.78$\pm$0.07} & \textBF{0.81$\pm$0.04} \\
			\bottomrule
		\end{tabular}
	\end{adjustwidth}
\end{table}
\begin{figure}[!h]
	\centering
	\includegraphics[width=0.85\linewidth]{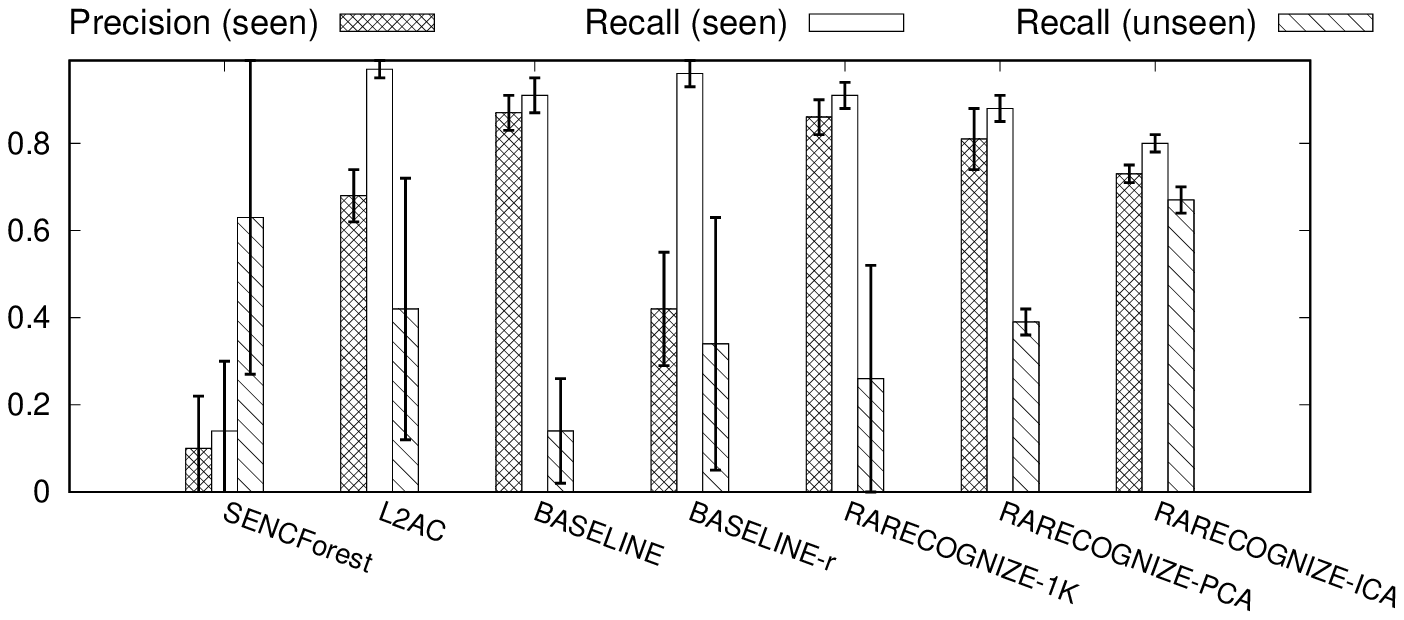}
	\caption{\preseen, \reseen and \reunseen of methods on \riskd (\method-ICA achieves the best balance between \precision and \recall on both seen and unseen test instances).}
	\label{fig:recall_seen_unseen}
\end{figure}

\subsection{Experiment Results}
In the following, we sequentially answer the questions by analyzing our experimental results and comparing between methods.

{\bf RQ1) Top-level Classification into Rare vs. Majority Class.}
We report the \precision, \recall and \fo of all methods on three datasets in Table~\ref{table:pre_re_f1}. For \senc, \lac, \bl and \bl-r, we report results for the representation (tfidf top-1K, PCA, ICA) that yielded the highest \fo value.

From Table~\ref{table:pre_re_f1}, we see that \method-1K, \method-PCA and \method-ICA outperform other methods in terms of \fo score in all cases and \precision, \recall in most cases. Compared to \bl and \bl-r, \fo score of \method-ICA is 6-13\% higher than the best result among the two. This demonstrates that cross- and self-correlations are crucial in \method. Surprisingly, the gap to \senc and \lac is even larger in terms of \fo, between 8-37\% higher. This shows that previous methods on detecting any new emerging classes do not work well when we only target rare subclasses.

Among the three versions of \method, \method-ICA gives the highest \fo. \method-ICA achieves the best balance between precision and recall while \method-1K and \method-PCA seem to have very high \precision but much lower \recall. That means that \method-1K and \method-PCA are better than \method-ICA at discarding majority samples and worse at recognizing rare subclasses.

In Fig.~\ref{fig:recall_seen_unseen}, we have the \preseen, \reseen and \reunseen measures of all the methods on \riskd (Figures for other datasets are similar, see Supp. A.2). This figure shows that \method-ICA also achieves a good balance between seen and unseen subclass classification, i.e., it recognizes both these subclasses equally well. On the other hand, most of other methods achieve high \preseen and \reseen but much lower \reunseen, except \senc which only has high \reunseen. This is because \senc rejects most instances as unseen which however hurts \precision drastically.

\begin{table}[!h]
	\begin{adjustwidth}{-.5in}{-.3in}
	\centering
	\caption{\rarerate of methods on the three datasets.}
	\label{table:unseen_rate}
	\begin{tabular}{l| r|r|r}
		\toprule
		\textbf{Methods} & \textbf{\riskd} & \textbf{\risks} & \textbf{\nyt} \\
		\hline
		\senc & 0.37$\pm$0.09  & 0.41$\pm$0.08  & 0.34$\pm$0.04  \\
		\lac & 0.22$\pm$0.17 & 0.20$\pm$0.20  & 0.08$\pm$0.12  \\
		\bl & 0.41$\pm$0.07 & 0.37$\pm$0.04 & 0.41$\pm$0.04 \\
		\bl-r & 0.43$\pm$0.16 & 0.38$\pm$0.03 & 0.42$\pm$0.04 \\
		\hline\hline
		\method-1K & 0.45$\pm$0.08 & 0.38$\pm$0.12 & 0.46$\pm$0.12 \\
		\method-PCA & 0.50$\pm$0.14 & 0.59$\pm$0.14 & 0.65$\pm$0.15 \\ 
		\method-ICA & \textBF{0.63$\pm$0.14} & \textBF{0.62$\pm$0.09} & \textBF{0.64$\pm$0.15} \\ 
		\bottomrule
	\end{tabular}
	\end{adjustwidth}
\end{table}
{\bf RQ2) Sub-level Classification into Recurrent and Emerging Rare Subclasses.}
We report the \rarerate of all the methods in Table~\ref{table:unseen_rate} (Breakdown of errors in confusion tables  are given in Supp. A.3).

Tables \ref{table:pre_re_f1} and \ref{table:unseen_rate} reflect that all three versions of \method are always better or comparable to the others in terms of \rarerate and \method-ICA achieves the highest value. \method-ICA achieves significantly higher \rarerate than all the baselines. \senc seems to perform the next best due to the fact that it classifies most of the instances as emerging which results in high classification performance on unseen subclasses.

\begin{figure}[!h]
	\captionsetup[subfigure]{justification=centering}
	\begin{subfigure}[b]{0.25\linewidth}
		\centering
		\includegraphics[width=0.9\linewidth]{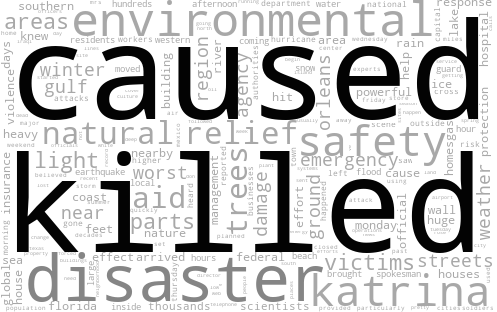}
		\caption{General disaster} 
		\label{fig7:a} 
	\end{subfigure}
	\rulesep
	\begin{subfigure}[b]{0.25\linewidth}
		\centering
		\includegraphics[width=0.9\linewidth]{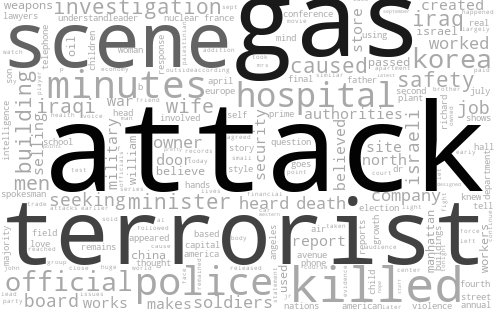}
		\caption{Explosions} 
		\label{fig7:b} 
	\end{subfigure} 
	\begin{subfigure}[b]{0.25\linewidth}
		\centering
		\includegraphics[width=0.9\linewidth]{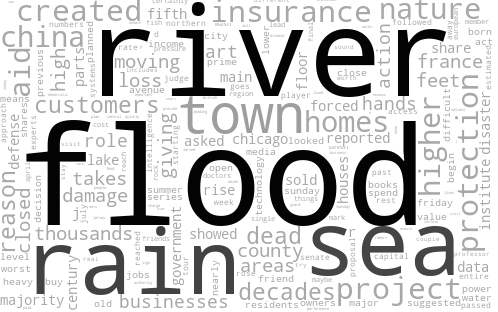}
		\caption{Floods} 
		\label{fig7:c} 
	\end{subfigure}
	\begin{subfigure}[b]{0.25\linewidth}
		\centering
		\includegraphics[width=0.9\linewidth]{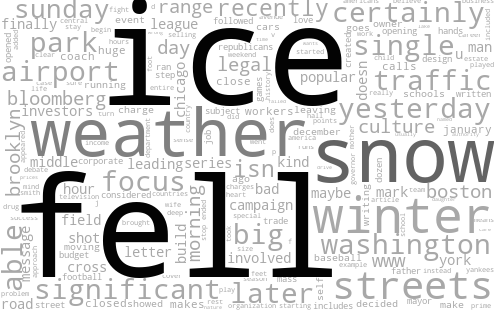}
		\caption{Snowstorms} 
		\label{fig7:d} 
	\end{subfigure}
	\caption{Word clouds for the weights of general and 3 specialized classifiers in \method-1K on \nyt (See Supp. A.4 for other subclasses).}
	\label{fig:word_cloud}
\end{figure}


{\bf RQ3) Model Interpretation.}
In Fig.~\ref{fig:word_cloud}, we plot the wordclouds representing the general and 3 specialized classifiers for 3 disaster subclasses (sizes of the words proportional to their weights learned by \method-1K). Existing methods, \senc and \lac, are not interpretable due to respective ensemble and deep neural network-based models they employ.

From Fig.~\ref{fig:word_cloud}, specialized disaster classifiers are clearly characterized by specific words closely related to the respective disasters, whereas the general classifier is heavily weighted by common words to every disaster. Specifically, Explosions classifier picks up \texttt{attack, gas, terrorist, scene} as most weighted keywords, and Snowstorms classifier puts heavy weights on words \texttt{ice, fell, snow, weather}. The general classifier is highlighted by the words \texttt{caused, killed, disaster} which describe consequences of most disasters. Wordclouds on other disaster subclasses, along with those for other datasets are in Supp. A.4.

Thanks to the interpretability that \method offers, we can look deeper into the significance of individual words in classifying documents. Besides its promising quantitative performance, these qualitative results confirm that our method has learned what agrees with human intuition.
\begin{figure}[!h]
	\centering
	\begin{minipage}[b]{0.495\linewidth}
		\centering
		\includegraphics[width=1\linewidth]{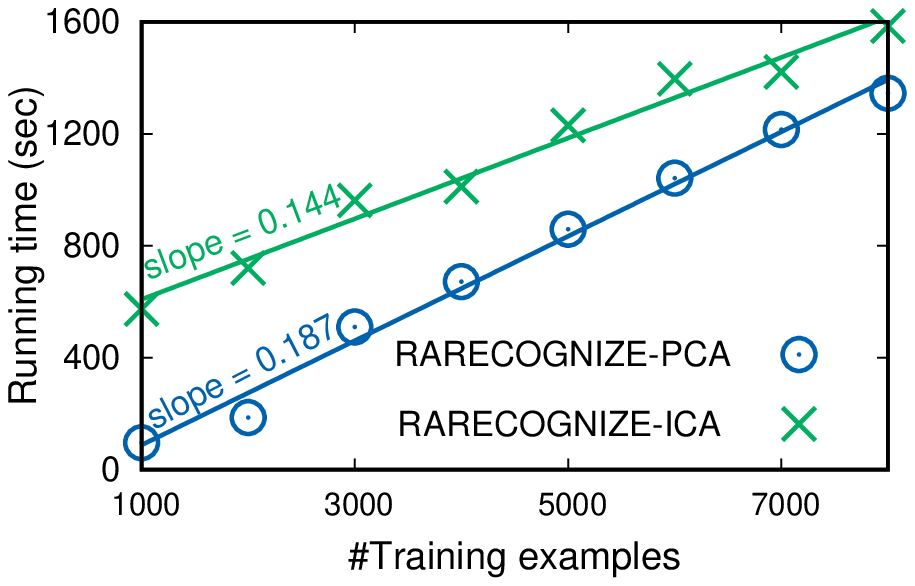}
		\caption{\method~scales linearly in number of training examples.} 
		\label{fig:scalability} 
	\end{minipage} \hfill
	\begin{minipage}[b]{0.495\linewidth}
		\centering
		\includegraphics[width=1\linewidth]{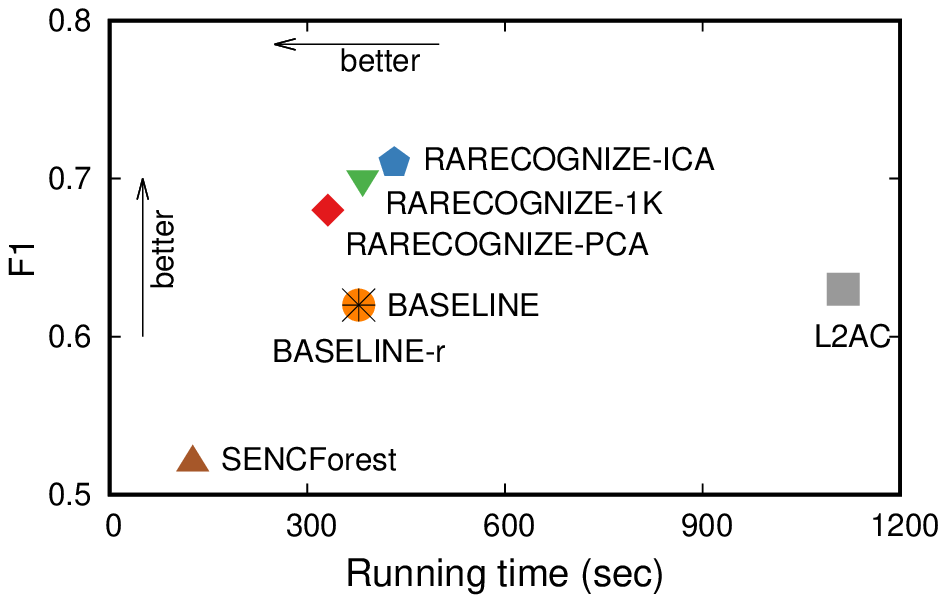}
		\caption{Time-Performance trade-off of all compared methods.} 
		\label{fig:time_quality} 
	\end{minipage}
\end{figure}

{\bf RQ4) Scalability and Time-Performance Trade-off.}
Besides our formal complexity analysis, we demonstrate the scalability of \method empirically. 
Fig.~\ref{fig:scalability} shows the running time of \method-PCA and \method-ICA when varying the amount of training data. The running time increases linearly with the data size. \method with PCA is faster than that with ICA thanks to no feature correlations (i.e. $\big(  \bx_{[p]}^T \bx_{[q]} \big)^2$ dropped).

In Fig.~\ref{fig:time_quality}, we show the time-performance trade-off among all compared methods. We conclude that \method with three representations run relatively fast, only slower than \senc, and returns the highest performance in terms of \fo. \lac consumes a huge amount of time for training a neural network, with subpar performance.

\section{Related Work}
\label{sec:relatedwork}
Our work is closely related with two fields, namely open-world classification and continual learning. Both belong to the category of lifelong machine learning \cite{chen2016lifelong}.

\textbf{Open-world classification.} 
Traditional close-world classification assumes that all test classes are known and seen in training data \cite{kim2014convolutional,zhang2015character}. However, such assumption could be violated in reality. Open-world classification, in contrast, assumes unseen and novel classes could emerge during test time, and addresses the classification problem by recognizing unseen classes. Previous works
\cite{shu2017doc,shu2018unseen,xu2019open,mu2017streaming,mu2017classification} propose different approaches under this setting.

Specifically, DOC
\cite{shu2017doc} leverages convolutional neural nets (CNNs) with multiple sigmoid functions to classify examples as seen or emerging. \cite{shu2018unseen} follows the same DOC module and performs hierarchical clustering to all rejected samples. Later, L2AC
\cite{xu2019open} proposes to use a meta-classifier and a ranker to add or delete a class without re-training. However, it requires a large amount of computation in both training and testing due to the top-$k$ search over all training data. 

SENCForest \cite{mu2017classification} is a randomized ensemble method. It grows multiple random forests and rejects examples when all random forests yield "new class". Under the same setting, SENC-MaS \cite{mu2017streaming} maintains matrix sketchings to decide whether an example belongs to a seen class or emerging.

In the emerging rare subclass setting, the previous approaches aim at recognizing \textit{any and every} classes and are not able to ignore the \emph{not-of-interest} classes while recognizing emerging ones, thus consume much more memory and time.

\textbf{Continual learning.}
There are recent works investigating continual learning or incremental learning \cite{shin2017continual,rebuffi2017icarl}. They aim at solving the issue of catastrophic forgetting \cite{french} in connectionist networks. In this field, models are proposed to continually learn new classes without losing performance on old seen classes.




Previous works \cite{rebuffi2017icarl,kirkpatrick2017overcoming,lee2017overcoming,kemker2018fearnet} show promising results. However, the number of documents in rare subclasses \emph{of-interest} in our setting is usually not large enough for neural networks to be sufficiently trained. Consequently, the neural network approach does not perform well in rare-class classification and recognition.

\section{Conclusion}
\label{sec:conclude}
We proposed \method for rare-class recognition over a continuous stream, in which new subclasses may emerge. \method employs a general classifier to filter out not-rare class instances (top-level) and a set of specialized classifiers that recognize known rare subclasses or otherwise reject as emerging (sub-level). Since majority of incoming instances are filtered out and new rare subclasses are a few, \method processes incoming data fast and grows in size slowly. Extensive experiments show that it outperforms two most recent state of the art as well as two simple baselines significantly in both top- and sub-level tasks, while achieving the best efficiency-performance balance and offering interpretability.
Future work will extend \method to an end-to-end system that clusters emerging instances and trains all the relevant models incrementally.

\hide{
\section*{Acknowledgments}\\
This research is sponsored by PricewaterhouseCoopers Risk and Regulatory Services Innovation Center at Carnegie Mellon University as well as NSF CAREER 1452425. 
Any opinions, findings, and recommendations expressed in this
material are those of the author(s) and do not necessarily reflect the views of
the funding parties.
}

{\small{
\bibliographystyle{abbrv}

}}

\newpage
\appendix
\section{Supplementary}
\label{sec:appendix}

\subsection{Proof of Theorem 1.}
\label{sup:proof_1}
Given that $\ell(\cdot)$ and L-$p$ norms for $p\geq 1$ are convex, and that sum of non-negative convex functions remains convex, it suffices to show that the correlation term (denoted by $C$) of the loss function $\mL$ is convex.
\beq
\label{overallC}
C =  \sum_{p,q} 
\bigg\{ 
{\frac{\mu}{4}
	(w^2_{0,p} w^2_{0,q})
	+ \frac{\mu}{4} \sum_{k'=1}^{K} (w^2_{k',p} w^2_{k',q})}
+ \frac{\mu}{2} {w^2_{0,p} (\sum_{k'=1}^{K} w^2_{k',q})}
\bigg\} 
\big(  \bx_{[p]}^T \bx_{[q]} \big)^2
\eeq

We prove convexity by showing that the Hessian matrix of $C$ is positive semi-definite (PSD). The Hessian in this case is a $d(K+1)\times d(K+1)$ matrix, denoted $\mathbf{H}$, containing all the second-order derivatives as illustrated in the following drawing. The diagonal $(d\times d)$ matrices correspond to self-correlation derivatives $\frac{\partial C}{\partial w_{0,z}\partial w_{0,t}}$ and $\frac{\partial C}{\partial w_{k',z}\partial w_{k',t}}$ for $k'=1\ldots K$. The $(d\times d)$ off-diagonals contain cross-correlation derivatives   $\frac{\partial C}{\partial w_{0,z}\partial w_{k',t}}$ for $k'=1\ldots K$.
\begin{center}
	\begin{tikzpicture}[
	box/.style={draw,rectangle,minimum size=1.5cm,text width=1.2cm,align=center}]
	\matrix (conmat) [row sep=.1cm,column sep=.1cm] {
		\node (one) [box,
		label=left:\( {d} \),
		label=above:\( {d} \),
		] {$\frac{\partial C}{\partial \bw_{0}\partial \bw_{0}}$};
		&
		\node (two) [box,
		label=above: {$d$}] {$\frac{\partial C}{\partial \bw_{0}\partial \bw_{1}}$};
		&
		\node (thr) [
		] {$\ldots$};
		&
		\node (four) [box,
		label=above: $d$] {$\frac{\partial C}{\partial \bw_{0}\partial \bw_{K}}$};
		
		\\
		
		\node (five) [box,
		label=left: $d$] {$\frac{\partial C}{\partial \bw_{1}\partial \bw_{0}}$};
		&
		\node (six) [box] {$\frac{\partial C}{\partial \bw_{1}\partial \bw_{1}}$};
		&
		\node (sev) [
		] {$\ldots$};
		&
		\node (eigh) [box] {$\frac{\partial C}{\partial \bw_{1}\partial \bw_{K}}$};
		\\	
			\node (nine) [] {$\vdots$}; & \node (ten) [
		] {$\vdots$}; & \node (ele) [
		] {$\ddots$}; & \node (tw) [
		] {$\vdots$};
		\\
		\node (five) [box,
		label=left: $d$] {$\frac{\partial C}{\partial \bw_{K}\partial \bw_{0}}$};
		&
		\node (six) [box] {$\frac{\partial C}{\partial \bw_{K}\partial \bw_{1}}$};
		&
		\node (sev) [
		] {$\ldots$};
		&
		\node (eigh) [box] {$\frac{\partial C}{\partial \bw_{K}\partial \bw_{K}}$};\\
	};
	\node [rotate=0,left=.5cm of conmat,anchor=center,text width=1.5cm,align=center] {\textbf{H = }};
	\end{tikzpicture}
\end{center}
We will derive the above three types of terms in the following.
$$
\text{\bf 1)} \;\;\; \frac{\partial C}{\partial \bw_{0}\partial \bw_{0}} \text{:}
$$

\beq
\text{ }\frac{\partial C}{\partial w_{0,z}\partial w_{0,t}} = 
\frac{\partial }{\partial w_{0,z}\partial w_{0,t}}
\sum_{p,q} 
\bigg\{ 
 \underbrace{\frac{\mu}{4} (w^2_{0,p} w^2_{0,q}) \big(  \bx_{[p]}^T \bx_{[q]} \big)^2}_{Z_1}
+ \underbrace{\frac{\mu}{2} \sum_{k'=1}^{K} (w^2_{0,p} w^2_{k',q})\big(  \bx_{[p]}^T \bx_{[q]} \big)^2}_{Z_2}
\bigg\} 
\eeq
which excludes the terms in Eq. \eqref{overallC} that do not depend on $\bw_0$.

\beq
\frac{\partial Z_1}{\partial w_{0,z}\partial w_{0,t}} = 
\left\{\begin{array}{lr}
	2\mu\; w_{0,z} w_{0,t} \;(\bx_{[z]}^T \bx_{[t]})^2 & \;\; \text{if } t\neq z\\
	&\\
	2\mu\; w^2_{0,z}\; (\bx_{[z]}^T \bx_{[z]})^2 
	+ \mu\; \sum_q w^2_{0,q}\; (\bx_{[z]}^T \bx_{[q]})^2 
	& \;\; \text{if } t = z\\
\end{array}\right\}
\eeq

\beq
\frac{\partial Z_2}{\partial w_{0,z}\partial w_{0,t}} = 
\left\{\begin{array}{lr}
	0 & \;\; \text{if } t\neq z\\
	&\\
	 \mu\; \sum_q \sum_{k'=1}^K w^2_{k',q}\; (\bx_{[z]}^T \bx_{[q]})^2 
	& \;\; \text{if } t = z\\
\end{array}\right\}
\eeq

Let us denote $\bW = \bw_0 \bw_0^T$ and $\bG = \bX^T\bX$. Then,
\beq
\Rightarrow \boxed{\frac{\partial C}{\partial \bw_{0}\partial \bw_{0}} =
2\mu  \; \bigg[ \bW  \odot \bG \odot \bG \bigg] + \text{diag}(v^{(0)}_1,\ldots, v^{(0)}_d)}
\eeq
where $v^{(0)}_p = \frac{1}{2} \sum_q (w^2_{0,q} + \sum_{k'=1}^K w^2_{k',q})
(\bx_{[p]}^T \bx_{[q]})^2 \geq 0$.

$$
\text{\bf 2)} \;\;\; \frac{\partial C}{\partial \bw_{k}\partial \bw_{k}} \text{:}
$$

\beq
\text{ }\frac{\partial C}{\partial w_{k,z}\partial w_{k,t}} = 
\frac{\partial }{\partial w_{k,z}\partial w_{k,t}}
\sum_{p,q} 
\bigg\{ 
\underbrace{\frac{\mu}{4} (w^2_{k,p} w^2_{k,q}) \big(  \bx_{[p]}^T \bx_{[q]} \big)^2}_{K_1}
+ \underbrace{\frac{\mu}{2} (w^2_{0,p} w^2_{k,q})\big(  \bx_{[p]}^T \bx_{[q]} \big)^2}_{K_2}
\bigg\} 
\eeq
which excludes the terms in Eq. \eqref{overallC} that do not depend on $\bw_k$.

\beq
\frac{\partial K_1}{\partial w_{k,z}\partial w_{k,t}} = 
\left\{\begin{array}{lr}
	2\mu\; w_{k,z} w_{k,t} \;(\bx_{[z]}^T \bx_{[t]})^2 & \;\; \text{if } t\neq z\\
	&\\
	2\mu\; w^2_{k,z}\; (\bx_{[z]}^T \bx_{[z]})^2 
	+ \mu\; \sum_q w^2_{k,q}\; (\bx_{[z]}^T \bx_{[q]})^2 
	& \;\; \text{if } t = z\\
\end{array}\right\}
\eeq

\beq
\frac{\partial K_2}{\partial w_{k,z}\partial w_{k,t}} = 
\left\{\begin{array}{lr}
	0 & \;\; \text{if } t\neq z\\
	&\\
	\mu\; \sum_p  w^2_{0,p}\; (\bx_{[p]}^T \bx_{[z]})^2 
	& \;\; \text{if } t = z\\
\end{array}\right\}
\eeq

Then, for $v^{(k)}_p = \frac{1}{2} \sum_q (w^2_{0,q} + w^2_{k,q})
(\bx_{[p]}^T \bx_{[q]})^2 \geq 0$ we can write
\beq
\Rightarrow \boxed{\frac{\partial C}{\partial \bw_{k}\partial \bw_{k}} =
	2\mu  \; \bigg[ \bw_k\bw_k^T  \odot \bG \odot \bG \bigg] + \text{diag}(v^{(k)}_1,\ldots, v^{(k)}_d)}
\eeq

$$
\text{\bf 3)} \;\;\; \frac{\partial C}{\partial \bw_{0}\partial \bw_{k}} \text{:}
$$

\beq
\text{ }\frac{\partial C}{\partial w_{0,z}\partial w_{k,t}} = 
\frac{\partial }{\partial w_{0,z}\partial w_{k,t}}
\sum_{p,q} 
\bigg\{ 
\underbrace{\frac{\mu}{2} (w^2_{0,p} w^2_{k,q})\big(  \bx_{[p]}^T \bx_{[q]} \big)^2}_{T}
\bigg\} 
\eeq
which excludes the terms in Eq. \eqref{overallC} that do not depend on both $\bw_0$ and $\bw_k$.

\beq
\frac{\partial T}{\partial w_{0,z}\partial w_{k,t}} = 
2\mu\; w_{0,z} w_{k,t} \; (\bx_{[z]}^T \bx_{[t]})^2 
\Rightarrow \boxed{\frac{\partial C}{\partial \bw_{0}\partial \bw_{k}} =
	2\mu  \; \bigg[ \bw_0\bw_k^T  \odot \bG \odot \bG \bigg] }
\eeq

Let us denote $\bD_0 = \text{diag}(v^{(0)}_1,\ldots, v^{(0)}_d)$,
$\bD_k = \text{diag}(v^{(k)}_1,\ldots, v^{(k)}_d)$, and $\bG^2 = \bG \odot \bG$. Then the Hessian matrix can be written as the sum of the following matrices.

\beq
\mathbf{H} = 
\begin{bmatrix}[c|c|c|c|c]
	\bw_0 \bw_0^T \odot \bG^2 & \bw_0 \bw_1^T \odot \bG^2 & \bw_0 \bw_2^T \odot \bG^2 & \dots  & \bw_0 \bw_K^T \odot \bG^2 \\ \hline
	\bw_1 \bw_0^T \odot \bG^2 & \bw_1 \bw_1^T \odot \bG^2 & \bzero&  \dots  & \bzero \\\hline
	\vdots & \vdots  & \vdots  & \ddots & \vdots \\\hline
	\bw_K \bw_0^T \odot \bG^2 & \bzero & \dots & \bzero & \bw_K \bw_K^T \odot \bG^2 \\
\end{bmatrix}
+
\begin{bmatrix}
	\bD_0 & \bzero & \bzero & \dots  & \bzero \\
	\bzero & \bD_1 & \bzero&  \dots  & \bzero \\
	\vdots & \vdots  & \vdots  & \ddots & \vdots \\
	\bzero & \bzero & \dots & \bzero & \bD_K \\
\end{bmatrix}
\eeq

The second diagonal matrix, denoted $\mathbf{H}_2$, is PSD as it contains non-negative entries $v^{(k)}_p = \frac{1}{2} \sum_q (w^2_{0,q} + w^2_{k,q})
(\bx_{[p]}^T \bx_{[q]})^2$, $p=1\ldots d$ for $k=1,\ldots, K$ and non-negative entries $v^{(0)}_p = \frac{1}{2} \sum_q (w^2_{0,q} + \sum_{k'=1}^K w^2_{k',q})
(\bx_{[p]}^T \bx_{[q]})^2$, $p=1\ldots d$. 

We can also show that the first matrix, denoted $\mathbf{H}_1$, is PSD by first decomposing it into
a sum of outer products between vectors of the form
\begin{align}
&\bG^2 \odot (
\begin{bmatrix} 
	\frac{\bw_0}{\sqrt{(K+1)}}  \\
	\bw_1 \\
	\bzero\\
	$\vdots$ \\
	\bzero
\end{bmatrix}
\begin{bmatrix} 
	\frac{\bw_0}{\sqrt{(K+1)}}  &
	\bw_1^T &
	\bzero^T&
	$\ldots$ &
	\bzero^T
\end{bmatrix}
+ 
\begin{bmatrix} 
	\frac{\bw_0}{\sqrt{(K+1)}}  \\
	\bzero\\
	\bw_2 \\
	$\vdots$ \\
	\bzero
\end{bmatrix}
\begin{bmatrix} 
	\frac{\bw_0}{\sqrt{(K+1)}}  &
	\bzero^T&
	\bw_2^T &
	$\ldots$ &
	\bzero^T
\end{bmatrix}
) \\
& \ldots + \begin{bmatrix} 
\frac{\bw_0}{\sqrt{(K+1)}}  \\
\bzero\\
$\vdots$ \\
\bzero\\
\bw_K
\end{bmatrix}
\begin{bmatrix} 
\frac{\bw_0}{\sqrt{(K+1)}}  &
\bzero^T&
$\ldots$ &
\bzero^T&
\bw_K^T 
\end{bmatrix}
= \bG^2 \odot (\bv_1\bv_1^T + \ldots + \bv_K\bv_K^T) = \mathbf{H}_1
\end{align}

It is easy to see that $\br^T \mathbf{H}_1 \br \geq 0$ for {\em any} vector $\br$, since $(\br^T \bv_k)(\bv_k^T\br) = s_k^2 \geq 0$ for all $k$ terms that constitute the sum.

We have shown that both $\mathbf{H}_1$ and $\mathbf{H}_2$ are PSD and 
since the sum of two PSD matrices is also PSD, then the Hessian matrix $\mathbf{H}$ is also PSD, which concludes the proof for convexity. \qed

\subsection{Top-level classification: \preseen, \reseen, \reunseen for \risks and \nyt datasets.} 
\label{sup:top_level}
Figure~\ref{fig:sent_recall_seen_unseen} and \ref{fig:nyt_recall_seen_unseen} provide additional results of \preseen, \reseen and \reunseen on two datasets: \risks and \nyt.
\begin{figure}[!h]
	\centering
	\includegraphics[width=0.85\linewidth]{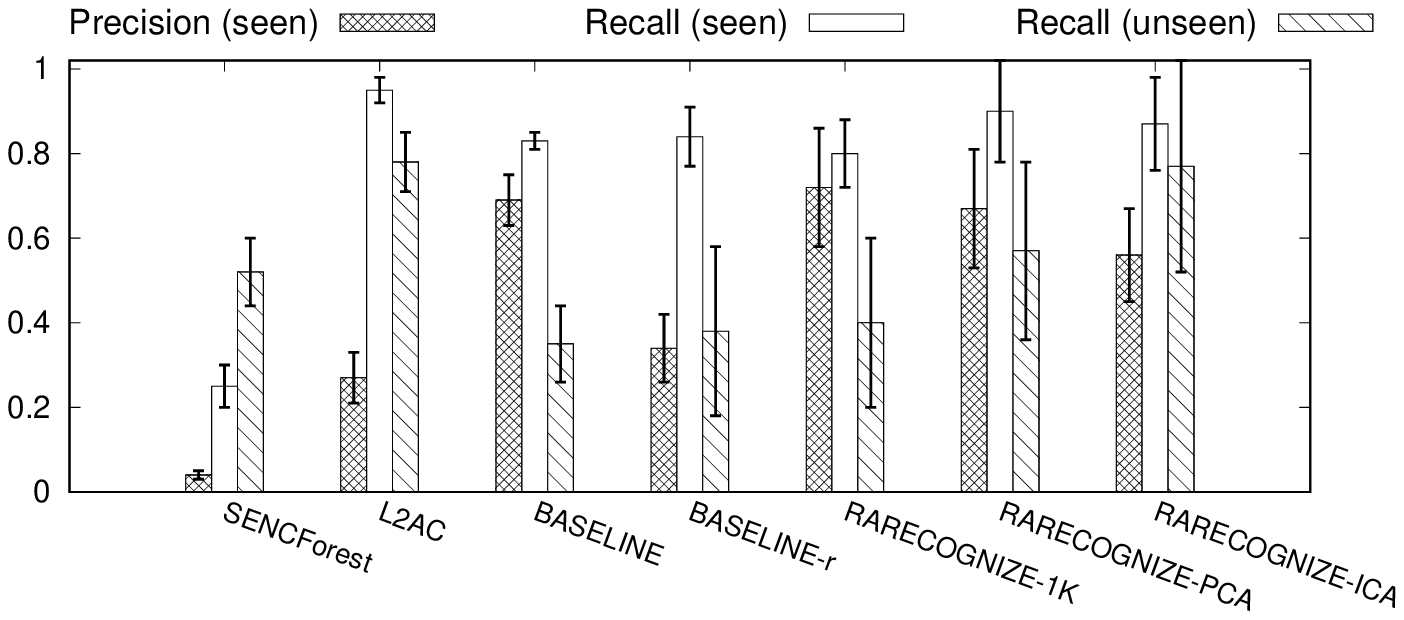}
	\caption{Top-level classification: \preseen, \reseen and \reunseen of methods on \risks dataset.}
	\label{fig:sent_recall_seen_unseen}
\end{figure}
\begin{figure}[!h]
	\centering
	\includegraphics[width=0.85\linewidth]{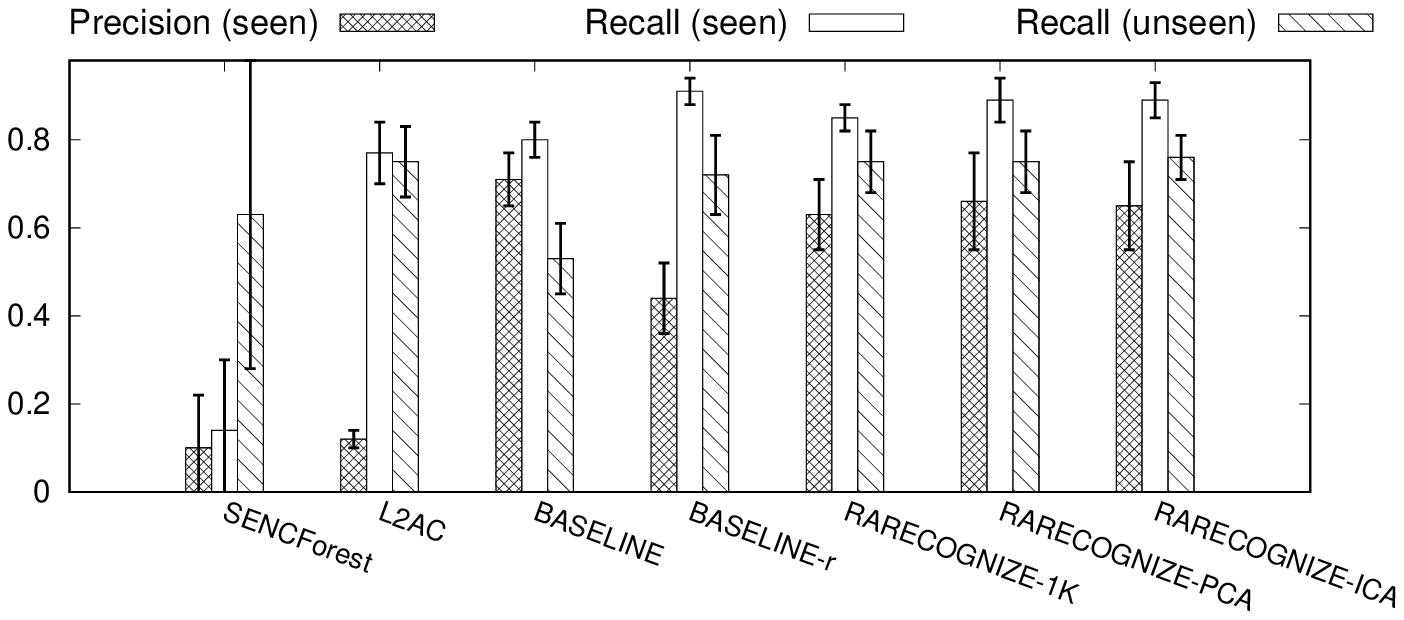}
	\caption{Top-level classification: \preseen, \reseen and \reunseen of methods on \nyt dataset.}
	\label{fig:nyt_recall_seen_unseen}
\end{figure}

\subsection{Sub-level classification: Breakdown confusion tables to compute \rarerate for the three datasets.}
\label{sup:sub_level}
Tables~\ref{table:rare_rate}, \ref{table:rare_rate_sent} and \ref{table:rare_rate_nyt} present confusion tables of \rarerate on \riskd, \risks and \nyt datasets.
\begin{table}[!h]
	\centering
	\caption{Sub-level classification: Breakdown confusion tables to compute \rarerate in \riskd (Bolded values reflect correct predictions).}
	\label{table:rare_rate}
	\begin{subtable}{.38\linewidth}
		\begin{tabular}{l|r|r|r}
			\toprule
			Label & $\mR_{ks}$ & $\mR_{u}$ & $\mN_{test}$ \\
			\midrule
			$\hat \mR_{ks}$ & \textBF{81.8} & \multirow{2}{*}{6.0} & \multirow{2}{*}{38.4} \\
			\cline{1-2}
			$\hat \mR_{k's}(k' \neq k)$ & 8.6 & & \\
			\hline
			$\hat \mR_{u}$ & 301.0 & \textBF{218.8} & 482.4 \\
			\hline
			$\hat \mN$ & 0.8 & 205.6 & \textBF{26.4}\\
			\bottomrule
		\end{tabular}
		\caption[a.]{\senc}
	\end{subtable}
	\begin{subtable}{.22\linewidth}
		\begin{tabular}{r|r|r}
			\toprule
			$\mR_{ks}$ & $\mR_u$ & $\mN_{test}$ \\
			\midrule
			\textBF{40.8} & \multirow{2}{*}{111.6} & \multirow{2}{*}{125.8} \\ \cline{1-1}
			225.6 & & \\
			\hline
			115.6 & \textBF{139.4} & 44.4 \\
			\hline
			10.2 & 179.4 & \textBF{377.0} \\
			\bottomrule
		\end{tabular}
		\caption{\lac}
	\end{subtable}
	\begin{subtable}{.22\linewidth}
		\begin{tabular}{r|r|r}
			\toprule
			$\mR_{ks}$ & $\mR_u$ & $\mN_{test}$ \\
			\midrule
			\textBF{294.0} & \multirow{2}{*}{64.2} & \multirow{2}{*}{46.6}\\\cline{1-1}
			14.0 & & \\
			\hline
			75.0 & \textBF{44.2} & 125.4 \\
			\hline
			9.2 & 322.0 & \textBF{375.2} \\
			\bottomrule
		\end{tabular}
		\caption{\bl}
	\end{subtable}
	\begin{subtable}{.3\linewidth}
		\centering
		\begin{tabular}{r|r|r}
			\toprule
			$\mR_{ks}$ & $\mR_u$ & $\mN_{test}$ \\
			\midrule
			\textBF{294.0} & \multirow{2}{*}{70.4} & \multirow{2}{*}{46.6}\\\cline{1-1}
			14.0 & & \\
			\hline
			75.0 & \textBF{59.2} & 125.6 \\
			\hline
			9.2 & 300.8 & \textBF{375.0} \\
			\bottomrule
		\end{tabular}
		\caption{\bl-r}
	\end{subtable}
	\begin{subtable}{.30\linewidth}
		\centering
		\begin{tabular}{r|r|r}
			\toprule
			$\mR_{ks}$ & $\mR_{u}$ & $\mN_{test}$ \\
			\midrule
			\textBF{293.0} & \multirow{2}{*}{28.0} & \multirow{2}{*}{35.2} \\\cline{1-1}
			 13.2 & & \\
			\hline
			54.0 & \textBF{77.2} & 21.0\\
			\hline
			32.0 & 325.2 & \textBF{491.0} \\
			\bottomrule
		\end{tabular}
		\caption[a.]{\method-1K}
	\end{subtable}
	\begin{subtable}{.30\linewidth}
		\centering
		\begin{tabular}{r|r|r}
			\toprule
			$\mR_{ks}$ & $\mR_u$ & $\mN_{test}$ \\
			\midrule
			\textBF{330.4} & \multirow{2}{*}{22.8} & \multirow{2}{*}{25.0}\\\cline{1-1}
			14.8 & & \\
			\hline
			0.4 & \textBF{81.0} & 48.2 \\
			\hline
			46.6 & 326.4 & \textBF{474.0} \\
			\bottomrule
		\end{tabular}
		\caption{\method-PCA}
	\end{subtable}
	\begin{subtable}{.30\linewidth}
		\centering
		\begin{tabular}{r|r|r}
			\toprule
			$\mR_{ks}$ & $\mR_u$ & $\mN_{test}$ \\
			\midrule
			\textBF{304.0} & \multirow{2}{*}{102.8} & \multirow{2}{*}{220.2}\\\cline{1-1}
			20.0 & & \\
			\hline
			0.0 & \textBF{214.2} & 137.2\\
			\hline
			68.2 & 113.4 & \textBF{189.8}\\
			\bottomrule
		\end{tabular}
		\caption{\method-ICA}
	\end{subtable}
\end{table}
\begin{table}[!h]
	\centering
	\caption{Sub-level classification: Breakdown confusion tables to compute \rarerate in \risks.}
	\label{table:rare_rate_sent}
	\begin{subtable}{.37\linewidth}
		\begin{tabular}{l|r|r|r}
			\toprule
			& $\mR_{ks}$ & $\mR_u$ & $\mN_{test}$ \\
			\midrule
			$\hat \mR_{ks}$ & \textBF{58.0} & \multirow{2}{*}{185.6} & \multirow{2}{*}{769.6} \\
			\cline{1-2}
			$\hat \mR_{k's} (k' \neq k)$ & 80.0 & & \\
			\hline
			$\hat \mR_u$ & 99.6 & \textBF{199.4} & 776.4 \\
			\hline
			$\hat \mN$ & 0.0 & 0.0 & \textBF{0.0}\\
			\bottomrule
		\end{tabular}
		\caption{\senc}
	\end{subtable}
	\begin{subtable}{.24\linewidth}
		\centering
		\begin{tabular}{r|r|r}
			\toprule
			$\mR_{ks}$ & $\mR_u$ & $\mN_{test}$ \\
			\midrule
			\textBF{57.2} & \multirow{2}{*}{236.2} & \multirow{2}{*}{426.6} \\ \cline{1-1}
			98.8 & & \\
			\hline
			70.8 & \textBF{65.6} & 187.8 \\
			\hline
			10.8 & 83.2 & \textBF{931.6} \\
			\bottomrule
		\end{tabular}
		\caption{\lac}
	\end{subtable}
	\begin{subtable}{.24\linewidth}
		\centering
		\begin{tabular}{r|r|r}
			\toprule
			$\mR_{ks}$ & $\mR_u$ & $\mN_{test}$ \\
			\midrule
			\textBF{156.0} & \multirow{2}{*}{68.8} & \multirow{2}{*}{92.0}\\\cline{1-1}
			11.0 & & \\
			\hline
			30.2 & \textBF{73.0} & 207.6 \\
			\hline
			40.4 & 243.2 & \textBF{1246.4} \\
			\bottomrule
		\end{tabular}
		\caption{\bl}
	\end{subtable}
	\begin{subtable}{.3\linewidth}
		\centering
		\begin{tabular}{r|r|r}
			\toprule
			$\mR_{ks}$ & $\mR_u$ & $\mN_{test}$ \\
			\midrule
			\textBF{156.0} & \multirow{2}{*}{74.2} & \multirow{2}{*}{92.0}\\\cline{1-1}
			11.0 & & \\
			\hline
			30.2 & \textBF{81.4} & 207.6 \\
			\hline
			40.4 & 229.4 & \textBF{1246.4} \\
			\bottomrule
		\end{tabular}
		\caption{\bl-r}
	\end{subtable}
	\begin{subtable}{.3\linewidth}
		\centering
		\begin{tabular}{r|r|r}
			\toprule
			$\mR_{ks}$ & $\mR_u$ & $\mN_{test}$ \\
			\midrule
			\textBF{158.0} & \multirow{2}{*}{83.6} & \multirow{2}{*}{36.0}\\
			\cline{1-1}
			10.6 & & \\
			\hline
			22.2 & \textBF{78.0} & 46.6\\
			\hline
			46.8 & 223.4 & \textBF{1463.4}\\
			\bottomrule
		\end{tabular}
		\caption{\method-1K}
	\end{subtable}
	\begin{subtable}{.3\linewidth}
		\centering
		\begin{tabular}{r|r|r}
			\toprule
			$ \mR_{ks}$ & $ \mR_u$ & $ \mN_{test}$ \\
			\midrule
			\textBF{186.0} & \multirow{2}{*}{41.6} & \multirow{2}{*}{52.8}\\ \cline{1-1}
			28.2 & & \\
			\hline
			2.2 & \textBF{180.8} & 70.2 \\
			\hline
			21.2& 161.4 & \textBF{1423.0} \\
			\bottomrule
		\end{tabular}
		\caption{\method-PCA}
	\end{subtable}
	\begin{subtable}{.3\linewidth}
		\centering
		\begin{tabular}{r|r|r}
			\toprule
			$\mR_{ks}$ & $\mR_u$ & $\mN_{test}$ \\
			\midrule
			\textBF{194.6} & \multirow{2}{*}{45.4} & \multirow{2}{*}{58.8}\\  \cline{1-1}
			20.2 & & \\
			\hline
			0.0 & \textBF{198.0} & 74.6 \\
			\hline
			20.4 & 150.8 & \textBF{1459.2}\\
			\bottomrule
		\end{tabular}
		\caption{\method-ICA}
	\end{subtable}
\end{table}
\begin{table}[!h]
	\centering
	\caption{Sub-level classification: Breakdown confusion tables to compute \rarerate in \nyt.}
	\label{table:rare_rate_nyt}
	\begin{subtable}{.4\linewidth}
		\begin{tabular}{l|r|r|r}
			\toprule
			& $\mR_{ks}$ & $\mR_u$ & $\mN_{test}$ \\
			\midrule
			$\hat \mR_{ks}$ & \textBF{0.2} & \multirow{2}{*}{1.0} & \multirow{2}{*}{0.0} \\
			\cline{1-2}
			$\hat \mR_{k's}(k' \neq k)$ & 0 & & \\
			\hline
			$\mR_u$ & 291.0 & \textBF{330.4} & 2112.4 \\
			\hline
			$\mN$ & 0.2 & 340.0 & \textBF{0.6}\\
			\bottomrule
		\end{tabular}
		\caption{\senc}
	\end{subtable}
	\begin{subtable}{.24\linewidth}
		\begin{tabular}{r|r|r}
			\toprule
			$\mR_{ks}$ & $\mR_u$ & $\mN_{test}$ \\
			\midrule
			\textBF{3.0} & \multirow{2}{*}{420.8} & \multirow{2}{*}{1246.6} \\ \cline{1-1}
			175.6 & & \\
			\hline
			47.0 & \textBF{77.2} & 337.6 \\
			\hline
			65.8 & 163.4 & \textBF{528.8} \\
			\bottomrule
		\end{tabular}
		\caption{\lac}
	\end{subtable}
	\begin{subtable}{.24\linewidth}
		\begin{tabular}{r|r|r}
			\toprule
			$\mR_{ks}$ & $\mR_u$ & $\mN_{test}$ \\
			\midrule
			\textBF{158.4} & \multirow{2}{*}{132.2} & \multirow{2}{*}{175.2}\\\cline{1-1}
			12.6 & & \\
			\hline
			64.0 & \textBF{234.4} & 774.0 \\
			\hline
			56.4 & 294.8 & \textBF{1163.8} \\
			\bottomrule
		\end{tabular}
		\caption{\bl}
	\end{subtable}
	\begin{subtable}{.3\linewidth}
		\centering
		\begin{tabular}{r|r|r}
			\toprule
			$\mR_{ks}$ & $\mR_u$ & $\mN_{test}$ \\
			\midrule
			\textBF{158.4} & \multirow{2}{*}{135.8} & \multirow{2}{*}{175.2}\\\cline{1-1}
			12.6 & & \\
			\hline
			64.0 & \textBF{243.2} & 774.0 \\
			\hline
			56.4 & 282.4 & \textBF{1163.8} \\
			\bottomrule
		\end{tabular}
		\caption{\bl-r}
	\end{subtable}
	\begin{subtable}{.3\linewidth}
		\centering
		\begin{tabular}{r|r|r}
			\toprule
			$\mR_{ks}$ & $\mR_u$ & $\mN_{test}$ \\
			\midrule
			\textBF{176.2} & \multirow{2}{*}{251.0} & \multirow{2}{*}{69.0}\\ \cline{1-1}
			16.4 & & \\
			\hline
			56.2 & \textBF{260.2} & 77.4 \\
			\hline
			42.6 & 150.2 & \textBF{1966.6} \\
			\bottomrule
		\end{tabular}
		\caption{\method-1K}
	\end{subtable}
	\begin{subtable}{.3\linewidth}
		\centering
		\begin{tabular}{r|r|r}
			\toprule
			$\mR_{ks}$ & $\mR_u$ & $\mN_{test}$ \\
			\midrule
			\textBF{221.4} & \multirow{2}{*}{57.2} & \multirow{2}{*}{85.4} \\ \cline{1-1}
			44.2 & &  \\
			\hline
			0.0 & \textBF{400.0} & 129.0 \\
			\hline
			25.8 & 204.2 & \textBF{1898.6} \\
			\bottomrule
		\end{tabular}
		\caption{\method-PCA}
	\end{subtable}
	\begin{subtable}{.3\linewidth}
		\centering
		\begin{tabular}{r|r|r}
			\toprule
			$\mR_{ks}$ & $\mR_u$ & $\mN_{test}$ \\
			\midrule
			\textBF{213.2} & \multirow{2}{*}{62.4} & \multirow{2}{*}{296.2}\\ \cline{1-1}
			52.6 & & \\
			\hline
			0.0 & \textBF{393.6} & 0.4\\
			\hline
			25.6 & 205.4 & \textBF{1816.4}\\
			\bottomrule
		\end{tabular}
		\caption{\method-ICA}
	\end{subtable}
\end{table}

\subsection{Interpretability: Wordclouds for \method in all three datasets \riskd, \risks and \nyt.}
\label{sup:word_cloud}
Fig.~\ref{fig:word_cloud_nyt}, \ref{fig:word_cloud_risk} and \ref{fig:word_cloud_sent} show complete word clouds for both general and specialized classifiers for all three datasets considered in our paper.
\begin{figure}[!h] 
	\captionsetup[subfigure]{justification=centering}
	\begin{subfigure}[b]{0.3\linewidth}
		\centering
		\includegraphics[width=0.95\linewidth]{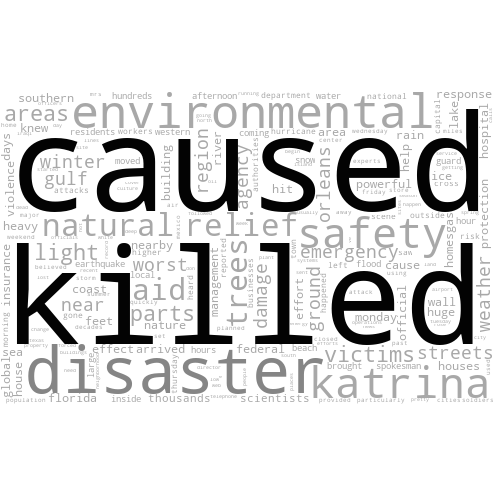} 
		\caption{General disaster} 
	\end{subfigure}
	\rulesep
	\begin{subfigure}[b]{0.3\linewidth}
		\centering
		\includegraphics[width=0.95\linewidth]{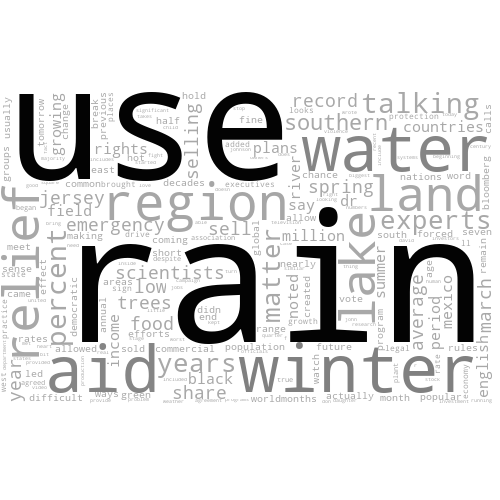} 
		\caption{Drought} 
	\end{subfigure}
	\begin{subfigure}[b]{0.3\linewidth}
		\centering
		\includegraphics[width=0.95\linewidth]{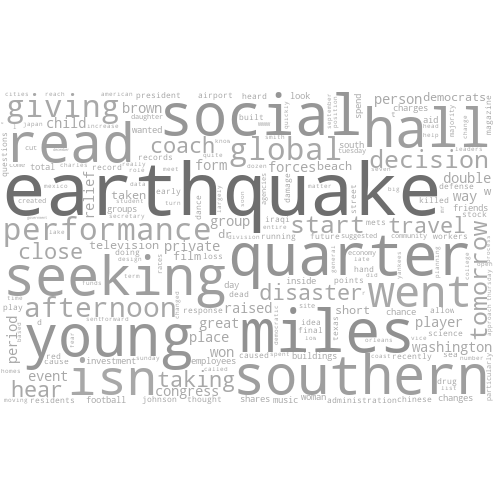} 
		\caption{Earthquakes} 
	\end{subfigure} \\ 
	\begin{subfigure}[b]{0.3\linewidth}
		\centering
		\includegraphics[width=0.95\linewidth]{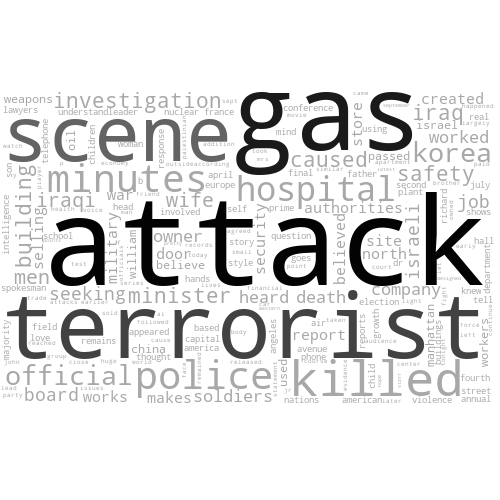} 
		\caption{Explosions} 
	\end{subfigure}
	\begin{subfigure}[b]{0.3\linewidth}
		\centering
		\includegraphics[width=0.95\linewidth]{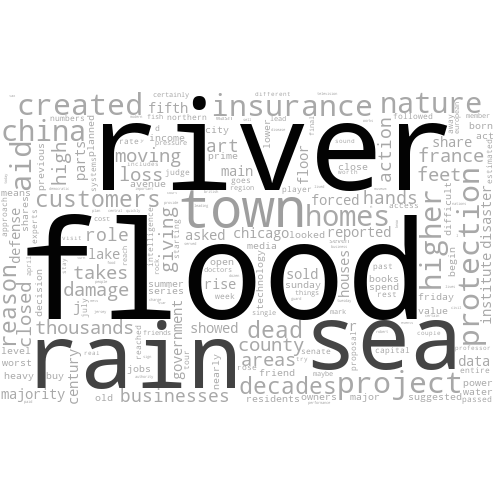} 
		\caption{Floods} 
	\end{subfigure}
	\begin{subfigure}[b]{0.3\linewidth}
		\centering
		\includegraphics[width=0.95\linewidth]{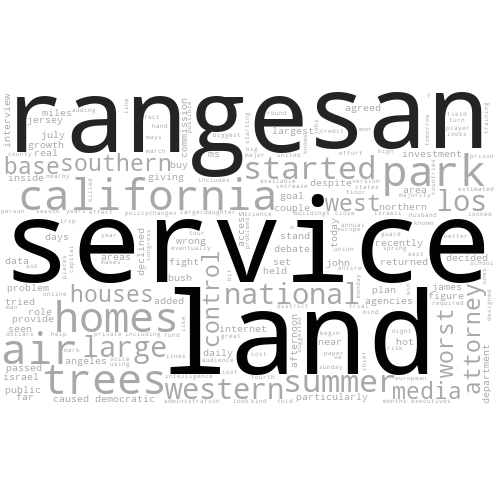} 
		\caption{Forest Fire} 
	\end{subfigure} \\
	\begin{subfigure}[b]{0.3\linewidth}
		\centering
		\includegraphics[width=0.95\linewidth]{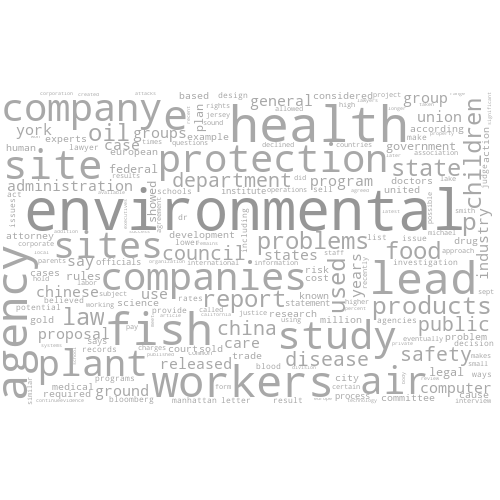} 
		\caption{Toxic Substance} 
	\end{subfigure}
	\begin{subfigure}[b]{0.3\linewidth}
		\centering
		\includegraphics[width=0.95\linewidth]{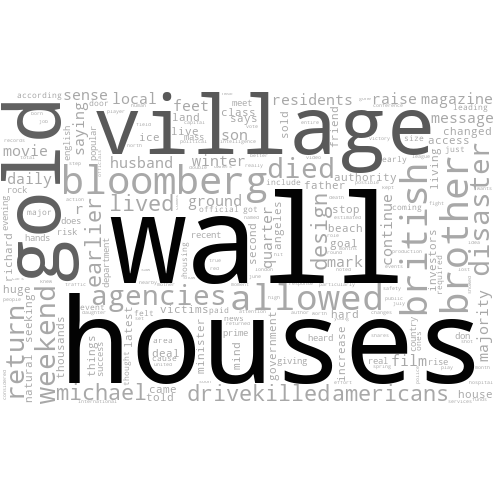} 
		\caption{Landslides} 
	\end{subfigure}
	\begin{subfigure}[b]{0.3\linewidth}
		\centering
		\includegraphics[width=0.95\linewidth]{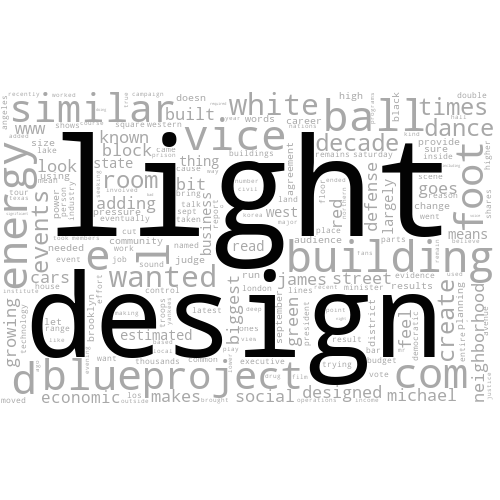} 
		\caption{Lightning} 
	\end{subfigure} \\
	\begin{subfigure}[b]{0.3\linewidth}
		\centering
		\includegraphics[width=0.95\linewidth]{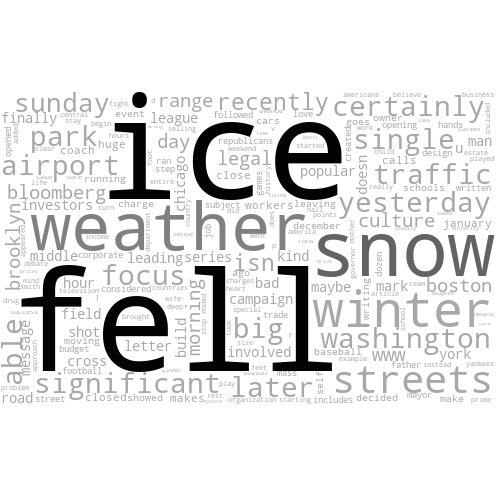} 
		\caption{Snowstorms} 
	\end{subfigure}
	\begin{subfigure}[b]{0.3\linewidth}
		\centering
		\includegraphics[width=0.95\linewidth]{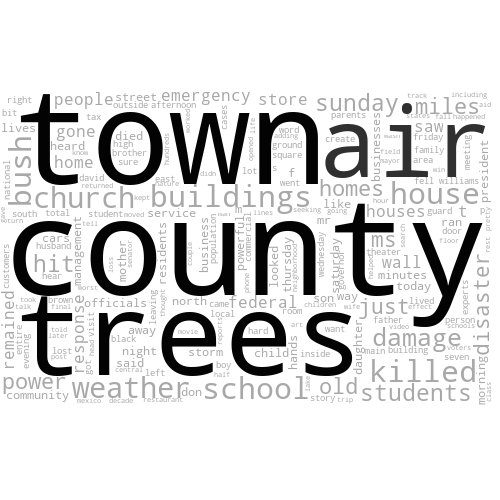} 
		\caption{Tornado}
	\end{subfigure}
	\begin{subfigure}[b]{0.3\linewidth}
		\centering
		\includegraphics[width=0.95\linewidth]{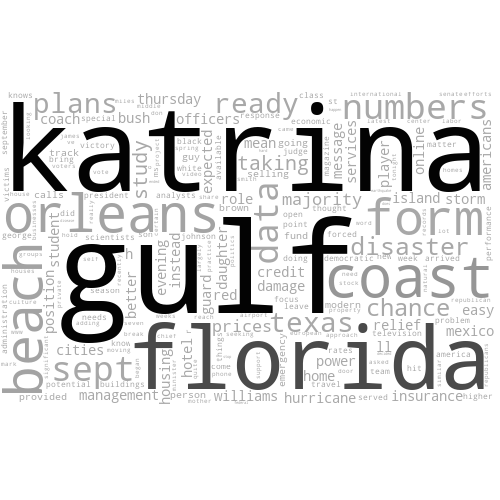} 
		\caption{Tropical Storms} 
	\end{subfigure} \\
	\begin{subfigure}[b]{0.3\linewidth}
		\centering
		\includegraphics[width=0.95\linewidth]{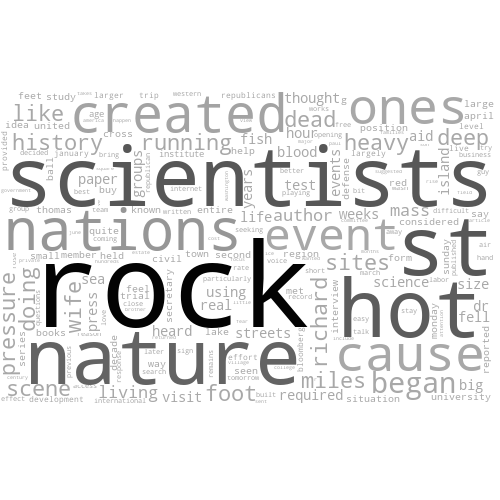} 
		\caption{Volcanoes} 
	\end{subfigure}
	\begin{subfigure}[b]{0.3\linewidth}
		\centering
		\includegraphics[width=0.95\linewidth]{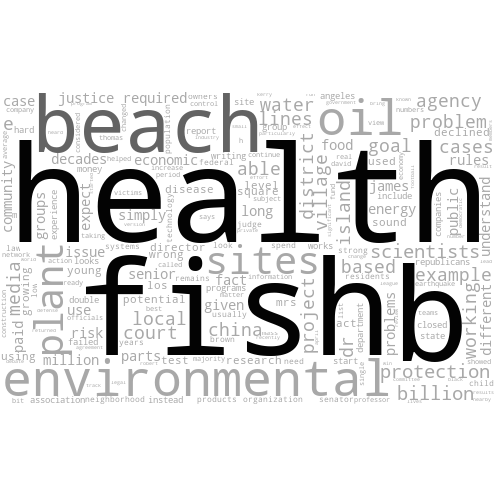} 
		\caption{Water Pollution} 
	\end{subfigure} 
	\caption{Interpretability: Word clouds representing learned weights by general and all specialized classifiers on \nyt dataset.}
	\label{fig:word_cloud_nyt} 
\end{figure}

\begin{figure}[!h] 
	\captionsetup[subfigure]{justification=centering}
	\begin{subfigure}[b]{0.3\linewidth}
		\centering
		\includegraphics[width=0.95\linewidth]{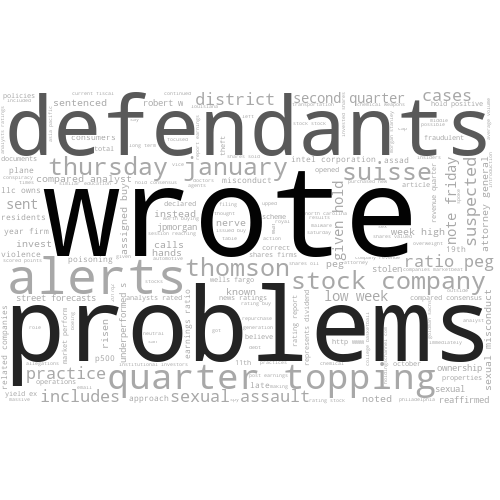} 
		\caption{General risk} 
	\end{subfigure}
	\rulesep
	\begin{subfigure}[b]{0.3\linewidth}
		\centering
		\includegraphics[width=0.95\linewidth]{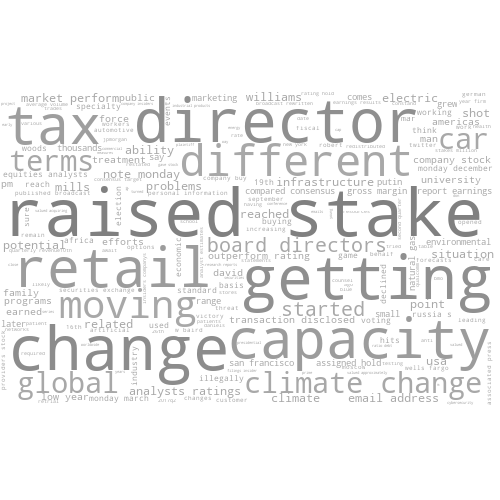} 
		\caption{Climate change} 
	\end{subfigure}
	\begin{subfigure}[b]{0.3\linewidth}
		\centering
		\includegraphics[width=0.95\linewidth]{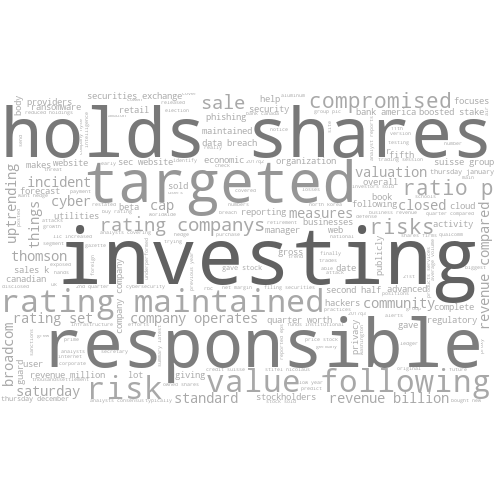} 
		\caption{Cyber-attack} 
	\end{subfigure} \\
	\begin{subfigure}[b]{0.3\linewidth}
		\centering
		\includegraphics[width=0.95\linewidth]{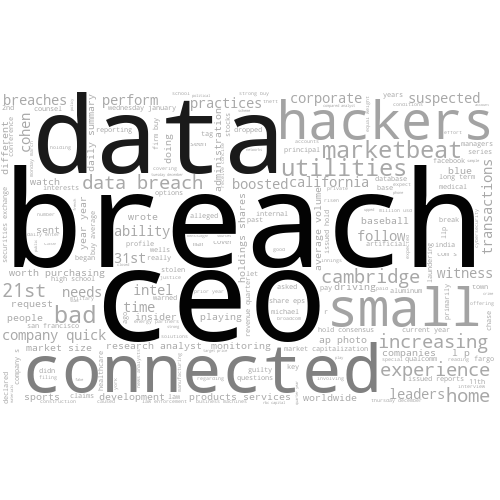} 
		\caption{Data leaks} 
	\end{subfigure} 
	\begin{subfigure}[b]{0.3\linewidth}
		\centering
		\includegraphics[width=0.95\linewidth]{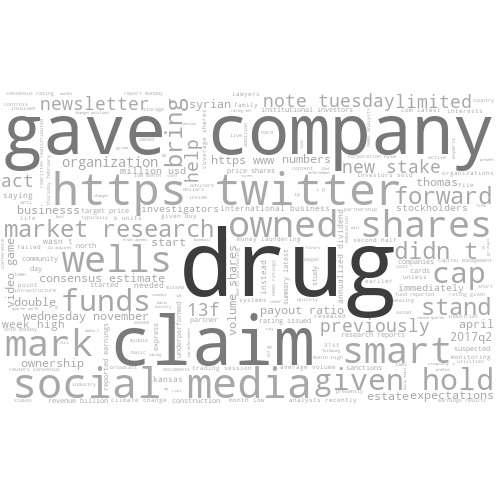} 
		\caption{Drug abuse} 
	\end{subfigure}
	\begin{subfigure}[b]{0.3\linewidth}
		\centering
		\includegraphics[width=0.95\linewidth]{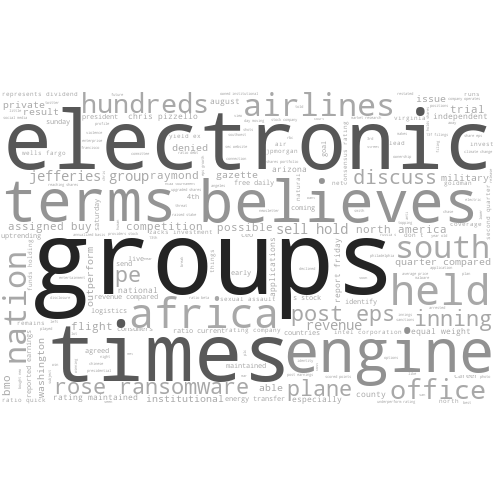} 
		\caption{Engine failure} 
	\end{subfigure} \\
	\begin{subfigure}[b]{0.3\linewidth}
		\centering
		\includegraphics[width=0.95\linewidth]{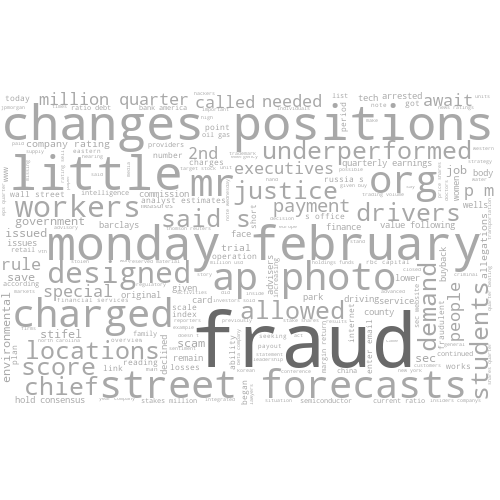} 
		\caption{Financial fraud} 
	\end{subfigure}
	\begin{subfigure}[b]{0.3\linewidth}
		\centering
		\includegraphics[width=0.95\linewidth]{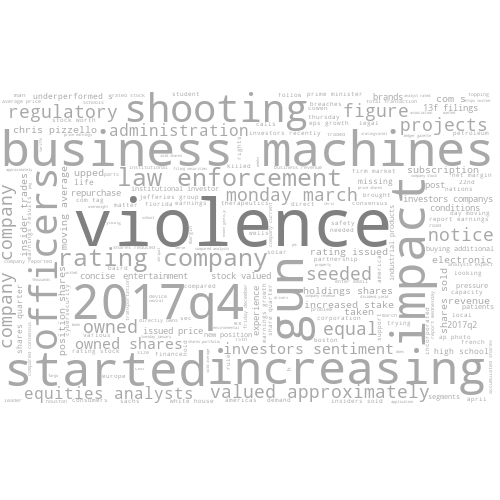} 
		\caption{Gun violation} 
	\end{subfigure}
	\begin{subfigure}[b]{0.3\linewidth}
		\centering
		\includegraphics[width=0.95\linewidth]{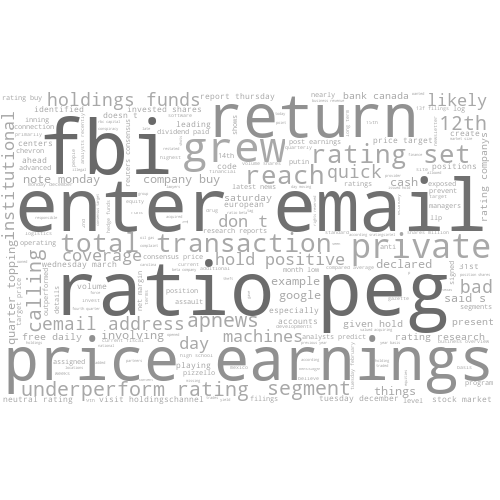} 
		\caption{Low stock rating} 
	\end{subfigure} \\
	\begin{subfigure}[b]{0.3\linewidth}
		\centering
		\includegraphics[width=0.95\linewidth]{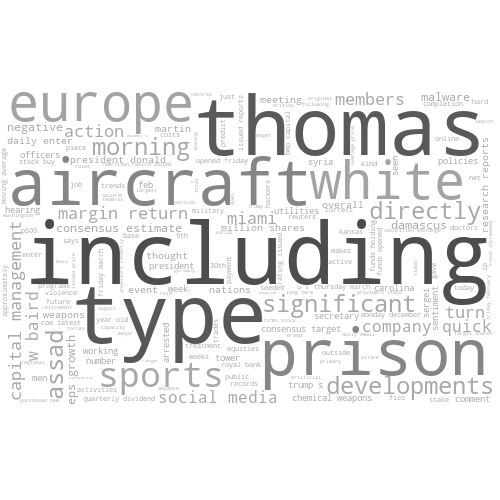} 
		\caption{Military attack} 
	\end{subfigure}
	\begin{subfigure}[b]{0.3\linewidth}
		\centering
		\includegraphics[width=0.95\linewidth]{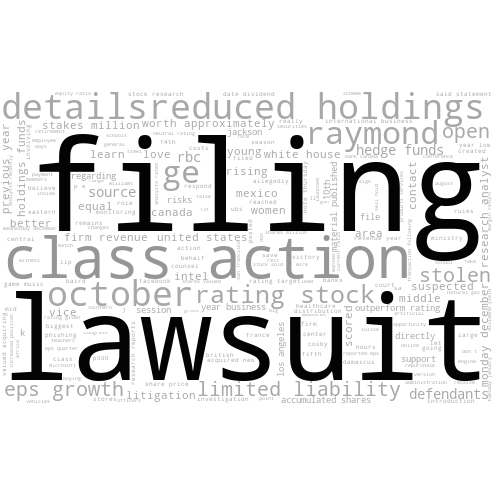} 
		\caption{Misleading statement}
	\end{subfigure} 
	\begin{subfigure}[b]{0.3\linewidth}
		\centering
		\includegraphics[width=0.95\linewidth]{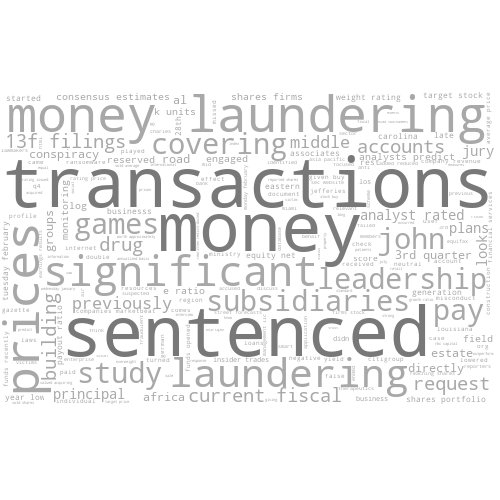} 
		\caption{Money laundering} 
	\end{subfigure}\\
	\begin{subfigure}[b]{0.3\linewidth}
		\centering
		\includegraphics[width=0.95\linewidth]{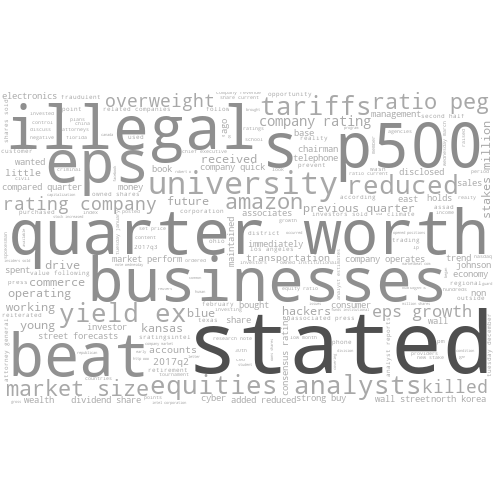} 
		\caption{Negative earning} 
	\end{subfigure} 
	\begin{subfigure}[b]{0.3\linewidth}
		\centering
		\includegraphics[width=0.95\linewidth]{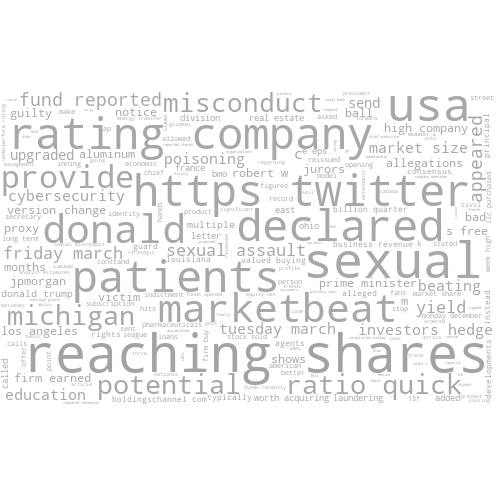} 
		\caption{Sexual assault} 
	\end{subfigure}
	\begin{subfigure}[b]{0.3\linewidth}
		\centering
		\includegraphics[width=0.95\linewidth]{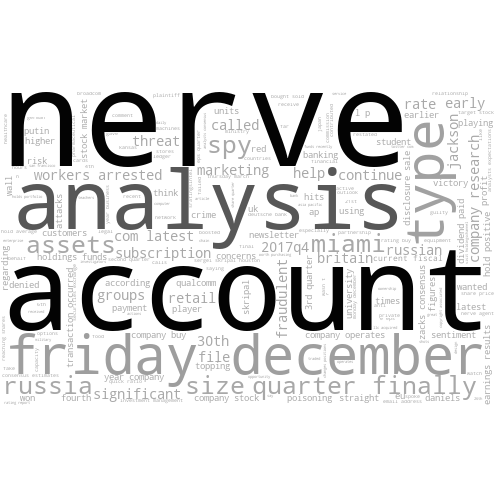} 
		\caption{Spying} 
	\end{subfigure}
\end{figure}
\begin{figure}[htb]\ContinuedFloat
	\begin{subfigure}[b]{0.3\linewidth}
		\centering
		\includegraphics[width=0.95\linewidth]{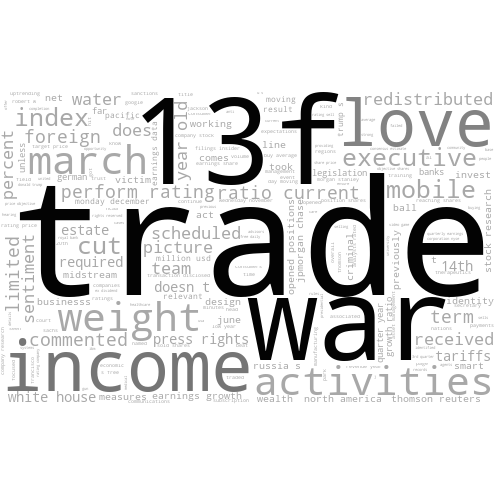} 
		\caption{Trade war} 
	\end{subfigure} 
	\caption{Interpretability: Word clouds representing learned weights by general and all specialized classifiers on \riskd dataset.}
	\label{fig:word_cloud_risk} 
\end{figure}

\begin{figure}[!h] 
	\captionsetup[subfigure]{justification=centering}
	\begin{subfigure}[b]{0.3\linewidth}
		\centering
		\includegraphics[width=0.95\linewidth]{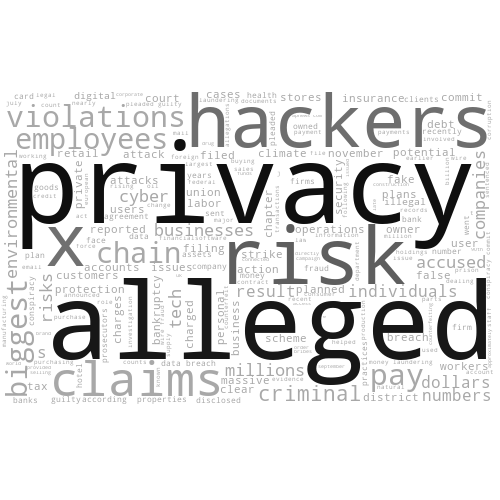} 
		\caption{General risk} 
	\end{subfigure}
	\rulesep
	\begin{subfigure}[b]{0.3\linewidth}
		\centering
		\includegraphics[width=0.95\linewidth]{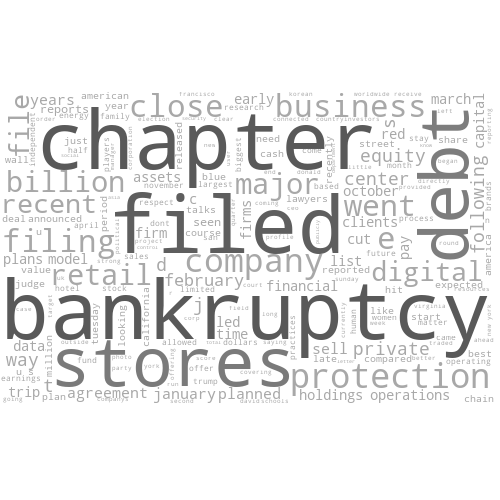} 
		\caption{Bankruptcy} 
	\end{subfigure} 
	\begin{subfigure}[b]{0.3\linewidth}
		\centering
		\includegraphics[width=0.95\linewidth]{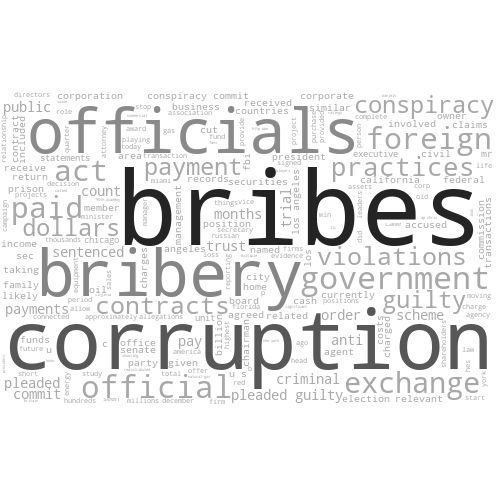} 
		\caption{Corruption} 
	\end{subfigure} \\
	\begin{subfigure}[b]{0.3\linewidth}
		\centering
		\includegraphics[width=0.95\linewidth]{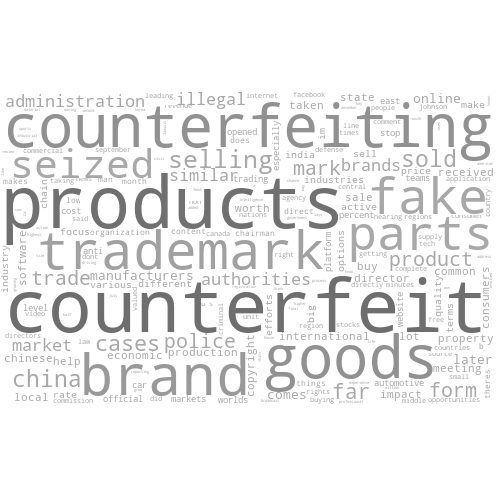} 
		\caption{Counterfeiting} 
	\end{subfigure}
	\begin{subfigure}[b]{0.3\linewidth}
		\centering
		\includegraphics[width=0.95\linewidth]{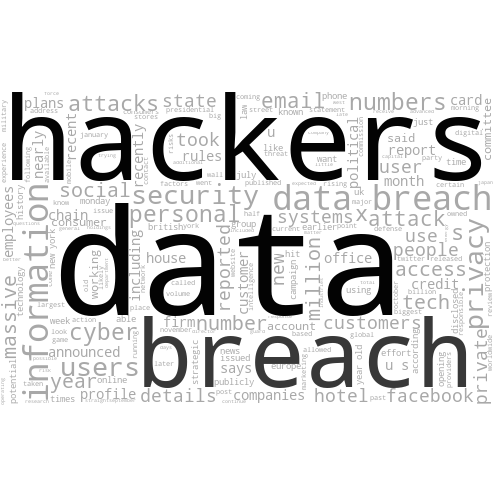} 
		\caption{Cyber-privacy} 
	\end{subfigure}
	\begin{subfigure}[b]{0.3\linewidth}
		\centering
		\includegraphics[width=0.95\linewidth]{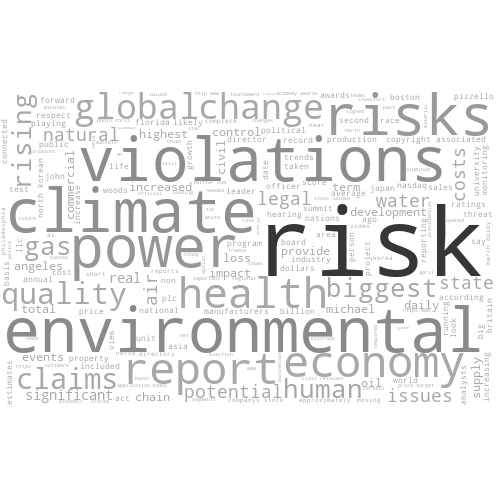} 
		\caption{Environment} 
	\end{subfigure} \\
	\begin{subfigure}[b]{0.3\linewidth}
		\centering
		\includegraphics[width=0.95\linewidth]{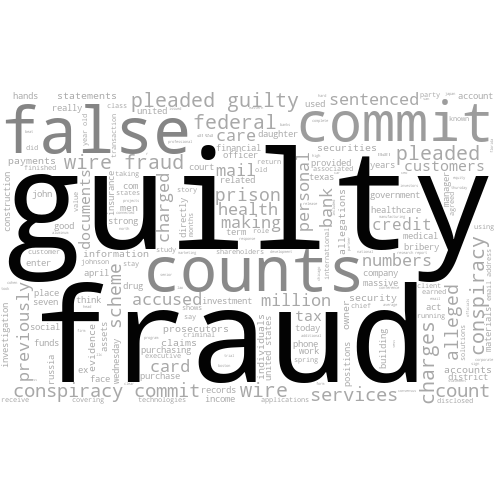} 
		\caption{Fraud False Claims} 
	\end{subfigure} 
	\begin{subfigure}[b]{0.3\linewidth}
		\centering
		\includegraphics[width=0.95\linewidth]{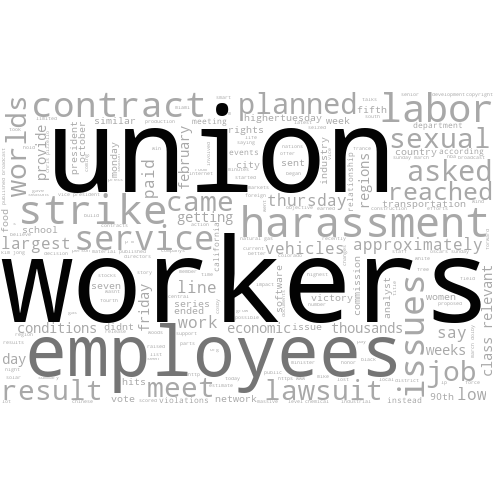} 
		\caption{Labor} 
	\end{subfigure} 
	\begin{subfigure}[b]{0.3\linewidth}
		\centering
		\includegraphics[width=0.95\linewidth]{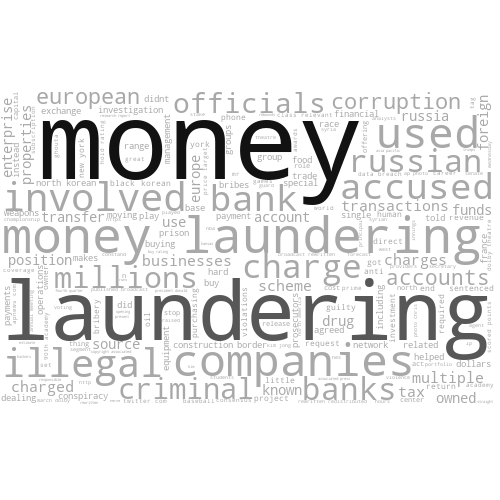} 
		\caption{Money laundering}
	\end{subfigure} 
	\caption{Interpretability: Word clouds representing learned weights by general and all specialized classifiers on \risks dataset.}
	\label{fig:word_cloud_sent} 
\end{figure}

\end{document}